\documentclass{article}

\usepackage[preprint,eabstract]{log_2026}

\usepackage{booktabs}						%
\usepackage{amssymb}
\usepackage{amsthm}
\usepackage{longtable}
\usepackage{array}
\usepackage{subcaption}
\usepackage{float}
\usepackage[export]{adjustbox}


\newtheorem{proposition}{Proposition}[section]

\usepackage{tikz}
\usetikzlibrary{arrows.meta, calc, fit, backgrounds, positioning}

\definecolor{custom1}{RGB}{1, 115, 178}
\definecolor{custom2}{RGB}{222, 143, 5}
\definecolor{custom3}{RGB}{2, 158, 115}
\definecolor{custom4}{RGB}{213, 94, 0}
\definecolor{custom5}{RGB}{204, 120, 188}
\definecolor{custom6}{RGB}{202, 145, 97}
\definecolor{custom7}{RGB}{251, 175, 228}
\definecolor{custom8}{RGB}{148, 148, 148}
\definecolor{custom9}{RGB}{236, 225, 51}
\definecolor{custom10}{RGB}{86, 180, 233}

\usepackage[numbers,compress,sort]{natbib}

\title{Can Graph Learning Learn Circuits?}%
\author[C. Tan et al.]{%
Chester Tan\thanks{Equal contribution.}\\
Chair of Machine Learning for Complex Networks\\
Center for AI and Data Science (CAIDAS)\\
Julius-Maximilians-Universität Würzburg\\
\email{chester.tan@uni-wuerzburg.de}
\And
Moritz Lampert\footnotemark[1]\\
Chair of Machine Learning for Complex Networks\\
Center for AI and Data Science (CAIDAS)\\
Julius-Maximilians-Universität Würzburg\\
\email{moritz.lampert@uni-wuerzburg.de}
\AND
Courtney Maynard\footnotemark[1]\\
Khoury College of Computer Sciences\\
Northeastern University\\
\email{maynard.co@northeastern.edu}
\And
Ankit Ramakrishnan\\
Network Science Institute\\
Northeastern University\\
\email{ramakrishnan.ank@northeastern.edu}
\AND
Tina Eliassi-Rad\thanks{Equal contribution.}\hspace{0.6em}\thanks{Also with the Network Science Institute at Northeastern University and the Santa Fe Institute.}\\
Khoury College of Computer Sciences\\
Northeastern University\\
\email{tina@eliassi.org}
\And
Ingo Scholtes\footnotemark[2]\\
Chair of Machine Learning for Complex Networks\\
Center for AI and Data Science (CAIDAS)\\
Julius-Maximilians-Universität Würzburg\\
\email{ingo.scholtes@uni-wuerzburg.de}%
}
\begin{document}

\maketitle
\setcounter{footnote}{0}

\begin{abstract}

    Circuit localization is a mechanistic interpretability task whose goal is to identify a sparse subgraph of a transformer's computation graph sufficient to reproduce a particular behavior. Most established methods localize circuits independently for each model--task pair. We instead frame circuit localization as a graph machine learning problem in which the edges of a computation graph represent computational pathways, and graph neural networks (GNNs) model interactions among these pathways. We introduce \emph{Graph Circuit Learning} (GCL), a supervised, amortized framework that trains a GNN across multiple model--task pairs and applies it to unseen cases. To provide sufficient data, we augment the \textsc{InterpBench} benchmark with additional cases derived from the \textsc{TracrBench} programs. Of the 14 evaluated GCL configurations, the highest scored a median edge AUROC of $0.902$ (interquartile interval $[0.861, 0.942]$) on the 16 original held-out \textsc{InterpBench} cases. This is close to the published \textsc{InterpBench} median of $0.910$ for EAP-IG while remaining below ACDC’s $0.959$. Removing all message-passing edges reduces the median to $0.825$. We also adapt PGExplainer, a GNN explainability method, to circuit localization, obtaining a median edge AUROC of $0.858$ on the same cases. These preliminary results suggest that graph machine learning offers a natural and potentially powerful perspective on circuit localization, and we hope this perspective encourages closer exchange between the two communities.

\end{abstract}

\section{Introduction}
\label{sec:introduction}

Circuit localization is a task in mechanistic interpretability that seeks to identify a sparse subgraph of a neural network's computation graph sufficient to reproduce a specified behavior.
Most existing methods infer a new circuit for each model--task pair, repeating the localization process and limiting the reuse of regularities learned from circuits in other models or tasks \citep{10.18653/v1/2024.blackboxnlp-1.25,10.48550/arXiv.2403.17806,10.48550/arXiv.2504.13151,10.52202/079017-0587,10.18653/v1/2021.naacl-main.74,10.18653/v1/2025.blackboxnlp-1.31,10.52202/075280-0719}.

Circuits themselves are graphs. They admit many representations and granularity levels, exposing computational relationships and symmetries that graph learning methods are designed to exploit \citep{10.48550/arXiv.2312.04501,10.48550/arXiv.2603.10090,10.52202/075280-1085}.
In particular, graph structure identifies where pathways meet, which provides a natural inductive bias for modeling dependencies between computationally related pathways.
As a motivation for why graph learning is a natural fit for circuit localization, we show that the effect of intervening on one pathway can depend on another pathway entering the same component (\Cref{sec:background}).

We investigate two complementary approaches to applying graph machine learning to circuit localization (\Cref{sec:circuit-learning}). \textbf{(1)} We introduce \emph{Graph Circuit Learning (GCL)}, a supervised, amortized framework that trains a graph neural network (GNN) across labeled model--task cases and applies it to unseen ones. \textbf{(2)} We adapt a popular GNN explainer, PGExplainer~\citep{10.48550/arXiv.2011.04573}, to the canonical per--model--task setting. To support cross-case training and evaluation, we construct a benchmark that extends \textsc{InterpBench}~\citep{10.52202/079017-2950} with 30 \textsc{TracrBench}-derived model--task pairs and a split designed to evaluate transfer to unseen pairs (\Cref{sec:benchmark}). Our preliminary results show that the best GCL configuration is competitive with published per-case baselines on held-out cases.

\begin{figure}[htb]
    \centering
    \begin{subfigure}{\linewidth}
        \centering
        \resizebox{\linewidth}{!}{%
            \input{figures/tikz/graph_construction}%
        }
        \caption{
            Constructing a component-level computation graph with edge features from a transformer and a prompt pair. Each component reads from the residual stream and writes an additive update back to it. Unrolling these interactions yields the graph $G_{\mathcal{M}}$, whose nodes are components and whose edges are computational pathways. Running the model on a clean prompt $\mathbf{x}$ and a corrupted prompt $\mathbf{x}'$, each containing $P$ token positions, produces the edge activations $z_e$. Together, the graph and activations form the clean (green) and corrupted (red) featured graphs passed to the learner in Figure~\ref{fig:framework}.
        }
        \label{fig:graph-construction}
    \end{subfigure}
    \\[8pt]
    \begin{subfigure}{\linewidth}
        \centering
        \resizebox{\linewidth}{!}{%
            \input{figures/tikz/framework}%
        }
        \caption{Existing circuit-localization methods optimize a separate mask for each model--task case. The illustrated approach instead trains a shared graph learner $g_\theta$ across cases and applies it to unseen ones. Our example graph learner transforms the component-level computation graph into a directed line graph~\citep{10.1007/BF02854581} (or an incidence graph~\citep{10.1007/978-3-319-00080-0,10.48550/arXiv.2304.10074}). It then aligns heterogeneous features using TabPFN-style 1D feature attention~\citep{10.1038/s41586-024-08328-6}, processes the transformed graph with a directed graph learner such as DirGNN~\citep{10.48550/arXiv.2305.10498} (or DAGformer~\citep{10.52202/075280-2070}), and pools across token positions and prompt pairs using a permutation-invariant method~\citep{10.48550/arXiv.1703.06114}. Finally, it reads out one circuit-membership score for each edge in the original component-level computation graph. This architecture is one possible design. \Cref{app:framework} details each stage.
        }
        \label{fig:framework}
    \end{subfigure}
    \caption{An overview of (a) the component-level computation graph construction and (b) the per-case circuit localization vs.\ cross-case circuit learning.}
    \label{fig:overview}
\end{figure}

\section{Circuit Localization as a Graph Machine Learning Problem}
\label{sec:background}

For a trained transformer $\mathcal{M}$, we represent its computation as a component-level directed acyclic graph $G_{\mathcal{M}}=(V_{\mathcal{M}},E_{\mathcal{M}})$, whose nodes are components such as attention heads or MLP blocks and whose edges are possible computational pathways through the residual stream \citep{mathematical-framework,10.52202/075280-0719} (\Cref{fig:graph-construction} and \Cref{app:transformer_dag}).
A circuit is a sparse subgraph of $G_{\mathcal{M}}$ sufficient to reproduce a specified behavior.
For a model--task case $(\mathcal{M},\mathcal{T})$, the task $\mathcal{T}$ provides clean--corrupted prompt pairs $(x,x')$ and a scalar metric $L_{\mathcal{T}}$.
Circuit localization infers a binary mask $\mathbf{y}\in\{0,1\}^{|E_{\mathcal{M}}|}$ over the candidate pathways: $y_e=1$ retains the clean message along edge $e$, whereas $y_e=0$ replaces it with the corresponding corrupted message. We denote the metric under this masked computation by $L_{\mathcal{T}}(\mathcal{M},x,x';\mathbf{y})$.

At the component level, $G_{\mathcal{M}}$ is heterogeneous. Its nodes have different computational roles, and their features vary in both dimension and semantics within and across models.
Therefore, learning across model--task cases is more challenging than learning on a fixed graph with a shared feature space. Similar challenges arise in heterogeneous graph foundation models~\citep{10.48550/arXiv.2505.15116}. \Cref{app:transformer_dag} discusses this issue and alternative graph representations.

The motivation for modeling relationships between computational pathways comes from their interactions. Consider two edges $e_1$ and $e_2$, whose interventions induce fixed additive message changes $\Delta m_{e_1,p}$ and $\Delta m_{e_2,p}$ at the same read input $u_{t,p}$ of component $t$ at token position $p$. Let $L_{t,p}(u)$ denote the scalar task metric obtained by replacing this read input with $u$ and continuing the downstream computation.
Writing $u_{t,p}^{\mathrm{cl}}$ for the clean read input, define $I_p(e_1,e_2)$ as the interaction contrast between applying both message changes jointly and applying each separately, as formalized in \Cref{eq:position-localised-interaction} of \Cref{app:theory}. Assume $L_{t,p}$ is twice continuously differentiable on a neighborhood containing the intervention surface and its Hessian is locally Lipschitz there. Let $Q_{t,p}=\nabla_u^2 L_{t,p}(u_{t,p}^{\mathrm{cl}})$ denote the Hessian at the clean read input. Then, for sufficiently small message changes, we have:
\begin{equation}
    I_p(e_1,e_2)
    =
    \Delta m_{e_1,p}^{\top}
    Q_{t,p}
    \Delta m_{e_2,p}
    +
    O\!\left(
    \left(
        \lVert\Delta m_{e_1,p}\rVert
        +
        \lVert\Delta m_{e_2,p}\rVert
        \right)^3
    \right).
    \label{eq:main-shared-target-interaction}
\end{equation}
\Cref{app:theory} contains the full derivation and proof. A nonzero mixed term indicates that the leading local interaction depends jointly on both message changes and the downstream computation.  Thus, pathway relevance cannot always be decomposed into independent edge scores. Graph representations that expose shared read inputs and other relationships among pathways provide graph learners with useful structure for modeling such interactions.

\section{Graph Machine Learning for Circuit Localization}
\label{sec:circuit-learning}

We investigate two complementary graph machine learning approaches for learning circuit masks.

\paragraph{Graph Circuit Learning (GCL)}

Rather than inferring a circuit mask independently for every model--task case, GCL trains a  GNN across multiple model--task cases using their ground-truth masks, then applies the learned predictor to unseen model--task cases (\Cref{fig:framework}).

Each training case $(\mathcal{M}_i,\mathcal{T}_i)$ consists of a transformer with component-level computation graph $G_{\mathcal{M}_i}$, a task $\mathcal{T}_i$, and a ground-truth circuit mask $\mathbf{y}_i$.
We train one predictor across cases:
$
    g_\theta
    \left(
    G_{\mathcal{M}},
    \mathcal{T}
    \right)
    =
    \hat{\mathbf{y}}
    \in
    [0,1]^{|E_{\mathcal{M}}|}
$
where $\hat{y}_e$ estimates whether edge $e$ is in the ground-truth circuit, and training is end-to-end against the ground-truth masks with binary cross-entropy.
Further implementation details are provided in \Cref{app:framework}.

\Cref{fig:framework} illustrates our primary graph-learning architecture, which comprises five stages: graph transformation, feature alignment, graph learning, pooling, and readout. This architecture represents only one way to learn from the component-level computation graph. Future work can explore alternative graph representations and learning architectures for circuit localization.

\paragraph{GNN Explainers for Circuit Localization}

PGExplainer \citep{10.48550/arXiv.2011.04573} is a GNN explainability approach that trains a multilayer perceptron (MLP) to generate edge masks explaining the predictions of a fixed, trained GNN. In our adaptation of PGExplainer, the MLP uses fixed features derived from transformer parameters to score each edge in the component-level computation graph. These scores define a sampled mask that is applied during a transformer forward pass. The MLP is trained to preserve the clean model output. Unlike GCL, a separate explainer is fitted for each model--task case. Figure~\ref{fig:pgexplainer-adaptation} illustrates how we adapt the original method from GNN explanation to circuit localization.

\begin{figure*}[t]
    \centering
    \resizebox{0.81\textwidth}{!}{%
        \begin{tikzpicture}[x=1.2cm,y=4.4cm,font=\scriptsize]
    \newcommand{\xtick}[2]{%
        \node[anchor=north,align=center] at (#1,-0.035) {#2};%
    }
    \newcommand{\learnedbox}[8]{%
        \draw[#7,line width=0.55pt] (#1,#2) -- (#1,#6);
        \draw[#7,line width=0.55pt] (#1-0.10,#2) -- (#1+0.10,#2);
        \draw[#7,line width=0.55pt] (#1-0.10,#6) -- (#1+0.10,#6);
        \filldraw[fill=#7!22,draw=#7,line width=0.65pt]
        (#1-0.22,#3) rectangle (#1+0.22,#5);
        \draw[#7,line width=1.05pt] (#1-0.22,#4) -- (#1+0.22,#4);
        \xtick{#1}{#8}
    }
    \newcommand{\learnedoutlier}[3]{%
        \filldraw[fill=white,draw=#3,line width=0.55pt]
        (#1,#2) circle (1.25pt);
    }
    \newcommand{\statinterval}[4]{%
        \draw[#4!80!black,densely dotted,line width=0.4pt]
        ({#1+0.29},{#2-#3}) -- ({#1+0.29},{#2+#3});
        \draw[#4!80!black,line width=0.4pt]
        ({#1+0.245},{#2-#3}) -- ({#1+0.335},{#2-#3});
        \draw[#4!80!black,line width=0.4pt]
        ({#1+0.245},{#2+#3}) -- ({#1+0.335},{#2+#3});
    }
    \newcommand{\learneduncertainties}[3]{%
        \foreach \value/\spread in {#2} {
                \statinterval{#1}{\value}{\spread}{#3}
            }
    }
    \newcommand{\publishedbox}[9]{%
        \draw[#8,densely dashed,line width=0.55pt] (#1,#2) -- (#1,#6);
        \draw[#8,densely dashed,line width=0.55pt] (#1-0.10,#2) -- (#1+0.10,#2);
        \draw[#8,densely dashed,line width=0.55pt] (#1-0.10,#6) -- (#1+0.10,#6);
        \filldraw[fill=#8!14,draw=#8,densely dashed,line width=0.65pt]
        (#1-0.22,#3) rectangle (#1+0.22,#5);
        \draw[#8,line width=1.05pt] (#1-0.22,#4) -- (#1+0.22,#4);
        \foreach \dx/\outlier in {#7} {
                \filldraw[fill=white,draw=#8,line width=0.55pt]
                ({#1+\dx},\outlier) circle (1.25pt);
            }
        \xtick{#1}{#9}
    }

    \foreach \y in {0,0.2,0.4,0.6,0.8,1.0} {
            \draw[black!12] (0.35,\y) -- (9.65,\y);
            \node[anchor=east] at (0.27,\y) {\pgfmathprintnumber[fixed,precision=1]{\y}};
        }
    \draw[->] (0.35,0) -- (0.35,1.055);
    \node[rotate=90,anchor=south] at (-0.28,0.52) {Edge AUROC};

    \node[font=\small\bfseries] at (2.30,1.075) {Our methods};
    \node[font=\small\bfseries] at (7.25,1.075) {Baselines};
    \draw[black!35] (4.36,0) -- (4.36,1.02);

    \learnedbox{1.00}{0.763889}{0.860784}{0.902357}{0.941667}{0.996078}{custom2}{GCL\\DirGraphConv\\(line graph)}
    \learneduncertainties{1.00}{0.712245/0.009,0.860784/0.015,0.902357/0.011,0.941667/0.008,0.996078/0.005}{custom2}
    \learnedoutlier{1.00}{0.712245}{custom2}
    \learnedbox{2.30}{0.557000}{0.741000}{0.825000}{0.964000}{0.992000}{custom6}{No message\\passing}
    \learneduncertainties{2.30}{0.557000/0.045,0.741000/0.009,0.825000/0.012,0.964000/0.015,0.992000/0.004}{custom6}
    \learnedbox{3.61}{0.668000}{0.742572}{0.857979}{0.985529}{0.998039}{custom4}{PGExplainer}
    \learneduncertainties{3.61}{0.668000/0.041,0.742572/0.026,0.857979/0.011,0.985529/0.004,0.998039/0.000}{custom4}

    \publishedbox{5.11}{0.794109}{0.839016}{0.959360}{0.987496}{1.000000}{-0.035/0.441173,0.035/0.441173}{custom8}{ACDC}
    \publishedbox{6.18}{0.470590}{0.766373}{0.910043}{0.986811}{1.000000}{0/0.411763}{custom8}{EAP-IG}
    \publishedbox{7.25}{0.450974}{0.763968}{0.884441}{1.000000}{1.000000}{0/0.313724}{custom8}{SP, edge}
    \publishedbox{8.33}{0.433337}{0.678291}{0.851998}{0.928417}{0.970583}{-0.035/0.156859,0.035/0.176468}{custom8}{SP, node}
    \publishedbox{9.40}{0.000000}{0.000000}{0.000000}{0.393201}{0.936163}{0/0.989791}{custom8}{EAP}

\end{tikzpicture}
    }
    \caption{
    Edge AUROC across the 16 held-out \textsc{InterpBench} cases. For our methods (solid box outlines), scores are averaged over 5 random seeds within each case; capped dotted intervals beside each box show $\pm 1$ seed-level standard deviation for each summary statistic. Corresponding seed-level variability is unavailable for the published baselines (dashed box outlines). The highlighted GCL configuration uses DirGNN with a GraphConv aggregator (DirGraphConv) on the directed line graph and is the strongest of the 14 configurations evaluated on these cases (\Cref{app:additional-results}). Baseline values are taken from Figure~9 of \textsc{InterpBench}~\citep{10.52202/079017-2950}.
    }
    \label{fig:heldout-auroc}
\end{figure*}

\section{A Benchmark for Circuit Localization Across Model--Task Pairs}
\label{sec:benchmark}

Evaluating circuit-localization methods requires knowing which circuit is correct. However, to our knowledge, there is no established way to obtain unique or provably correct ground-truth circuits for real-world transformers (\Cref{app:extended-related-work}). \textsc{InterpBench}~\citep{10.52202/079017-2950} addresses this challenge using realistic semi-synthetic transformers with circuits known by construction.

Restricted Access Sequence Processing (RASP) is a domain-specific language for expressing sequence-processing algorithms in transformer-like operations~\citep{10.48550/arXiv.2106.06981}. Each RASP-derived case begins with a program whose primitives correspond to transformer operations. \textsc{Tracr} compiles the program into transformer weights~\citep{10.52202/075280-1649}, after which Strict Interchange Intervention Training (SIIT) trains a new transformer to preserve the same computation and ground-truth circuit while producing weight distributions closer to those of conventionally trained models~\citep{10.52202/079017-2950}.

However, \textsc{InterpBench} was designed for the per--model--task setting and contains too few pairs to train a cross-case localizer or evaluate generalization to unseen pairs. To support cross-case learning, we apply the same SIIT training procedure to \textsc{TracrBench} model--task pairs~\citep{10.48550/arXiv.2409.13714}, adding 30 pairs to \textsc{InterpBench}'s 84 RASP-derived pairs. We \textbf{(1)} retain \textsc{InterpBench}'s 16 original evaluation pairs as the held-out test set, \textbf{(2)} exclude every related pair in any group containing a test case, and \textbf{(3)} use the remaining 50 pairs for grouped 5-fold cross-validation.~\Cref{app:benchmark} describes the added pairs and split design in detail.

\section{Preliminary Evidence and Discussion}
\label{sec:experiments}

We evaluate both methods by edge AUROC on each held-out pair in our benchmark (\Cref{app:experimental-details}). We select from up to 18 hyperparameter settings using grouped 5-fold cross-validation on 50 cases and average results over 5 random seeds per held-out case.

Of the 14 configurations evaluated,~\Cref{fig:heldout-auroc} compares the best-performing setup with \textbf{(1)} a control with all message-passing edges removed, \textbf{(2)} the adapted PGExplainer, and \textbf{(3)} the published \textsc{InterpBench} baselines. The selected GCL configuration achieves a higher median edge AUROC ($0.902$) than both the control ($0.825$) and PGExplainer ($0.858$). GCL and our adapted PGExplainer also outperform every published baseline in minimum AUROC, and GCL does so at the first quartile. Because the control retains every node and all node features, the comparison provides evidence that message passing over the observed computation graph contributes to performance. More broadly, these results suggest that amortized graph learning is a promising approach to circuit localization, while the adapted PGExplainer’s performance demonstrates the feasibility of adapting graph explainability methods to this setting. \Cref{app:additional-results} provides additional experimental results and ablations.

\paragraph{Discussion.} This preliminary study focuses on synthetic model--task pairs with known ground-truth circuits. Its aim is not to identify the optimal graph-learning architecture, but to show that circuit localization admits a natural graph-learning formulation and that this formulation \emph{can} improve performance in certain settings.
Future work should evaluate this approach on larger, real-world models, using metrics such as faithfulness when ground-truth circuits are unavailable \citep{10.48550/arXiv.2504.13151}, and explore the broader design space of graph representations and graph-learning methods for circuit localization.

\bibliographystyle{unsrtnat-preserve-title}
\bibliography{reference}

\appendix

\section{A Transformer as a Component-Level Directed Acyclic Graph (DAG)}
\label{app:transformer_dag}

In circuit localization, the computations inside a decoder-only transformer are often represented using a different, but mathematically equivalent, reformulation of the transformer definition given by \citet{10.48550/arXiv.1706.03762}.
While this formulation would be less computationally efficient if used as an implementation, it is easier to interpret because the residual stream, through which all components communicate, is linear \citep{mathematical-framework}.
The nonlinear components, such as attention heads and MLP blocks, communicate through this residual stream via two operations:
(i) each component ``reads'' information from the residual stream through a projection $W_I^*$, and
(ii) ``writes'' information back into the residual stream by adding its internal output, projected through $W_O^*$ (see \Cref{fig:overview_residual_stream}).
The read and write projections constrain which directions in the residual stream a component can access and modify, and thus provide a useful geometric view of how components interact through the shared residual stream \citep{10.52202/079017-1962,10.48550/arXiv.2311.17030}.

\begin{figure}[htb]
    \centering
    \resizebox{\linewidth}{!}{%
        \begin{tikzpicture}[
    >=Stealth,
    stream/.style={draw=black, line width=2pt},
    read/.style={draw=custom3, line width=1.2pt, ->, rounded corners=6pt},
    write/.style={draw=custom4, line width=1.2pt, ->, rounded corners=6pt},
    internal/.style={draw=black, line width=1pt, ->},
    proj/.style={rectangle, draw=custom1!60, fill=custom1!10, thick, rounded corners=3pt, align=center, font=\footnotesize},
    head/.style={rectangle, draw=custom2!80, fill=custom2!10, thick, rounded corners=3pt, align=center, font=\footnotesize},
    comp/.style={rectangle, draw=custom5!80, fill=custom5!10, thick, rounded corners=3pt, align=center, font=\footnotesize},
    add/.style={circle, draw=black, thick, minimum size=0.7cm, inner sep=0pt, font=\Large},
    label/.style={text=black}
]

\node (x0) [font=\Large] {$x_0$};
\node (add_a1) [add, right=2.0cm of x0] {$+$};
\node (add_m1) [add, right=2.5cm of add_a1] {$+$};
\node (x1) [right=0.5cm of add_m1] {};
\node (xn1) [right=1.0cm of x1] {};
\node (add_an) [add, right=2.0cm of xn1] {$+$};
\node (add_mn) [add, right=2.5cm of add_an] {$+$};
\node (xn) [font=\Large, right=1.0cm of add_mn] {$x_n$};

\draw[stream] (x0) -- (add_a1);
\draw[stream] (add_a1) -- (add_m1);
\draw[stream] (add_m1) -- (x1) node[above=0.05cm, label, font=\bfseries] {$x_1$};
\draw[stream, dotted] (x1) -- (xn1) node[above=0.05cm, label, font=\bfseries] {$x_{n-1}$};
\draw[stream] (xn1) -- (add_an);
\draw[stream] (add_an) -- (add_mn);
\draw[stream, ->] (add_mn) -- (xn);

\foreach \l/\leftA/\leftM in {1/x0/add_a1, n/xn1/add_an} {

    \coordinate (mid_a\l) at ($(\leftA)!0.7!(add_a\l)$);
    \coordinate (r_a\l) at ($(\leftA.east) + (0.5, 0)$);

    \node (h\l_1) [head, above right=1.3cm and 0.7cm of mid_a\l] {$A_{\l}^1$};
    \node (h\l_2) [head, above=0.8cm of h\l_1] {$A_{\l}^2$};

    \node (wi_a\l_1) [proj, left=0.3cm of h\l_1] {$W_I^{A^1_\l}$};
    \node (wo_a\l_1) [proj, right=0.3cm of h\l_1] {$W_O^{A^1_\l}$};

    \node (wi_a\l_2) [proj, left=0.3cm of h\l_2] {$W_I^{A^2_\l}$};
    \node (wo_a\l_2) [proj, right=0.3cm of h\l_2] {$W_O^{A^2_\l}$};

    \draw[read] (mid_a\l)
        -- ++(0, 0.8)
        -| ([xshift=-0.5cm]wi_a\l_1.west)
        -- (wi_a\l_1.west);

    \draw[read] (mid_a\l)
        -- ++(0, 0.8)
        -| ([xshift=-0.5cm]wi_a\l_2.west) %
        -- (wi_a\l_2.west);

    \draw[internal] (wi_a\l_1) -- (h\l_1);
    \draw[internal] (h\l_1) -- (wo_a\l_1);
    \draw[internal] (wi_a\l_2) -- (h\l_2);
    \draw[internal] (h\l_2) -- (wo_a\l_2);

    \draw[write] (wo_a\l_1.east)
        -- ++(0.4, 0)
        |- ([yshift=0.8cm]add_a\l.center)
        -- (add_a\l.north);

    \draw[write] (wo_a\l_2.east)
        -- ++(0.4, 0) %
        |- ([yshift=0.8cm]add_a\l.center)
        -- (add_a\l.north);

    \coordinate (mid_m\l) at ($(\leftM)!0.5!(add_m\l)$);
    \coordinate (r_m\l) at ($(\leftM.east) + (0.5, 0)$);

    \node (mlp\l) [comp, below left=1.2cm and 0.2cm of mid_m\l] {$\text{MLP}_\l$};
    \node (wi_m\l) [proj, left=0.3cm of mlp\l] {$W_I^{\text{MLP}_\l}$};
    \node (wo_m\l) [proj, right=0.3cm of mlp\l] {$W_O^{\text{MLP}_\l}$};

    \draw[read] (mid_m\l)
        -- ++(0, -0.6)
        -| ([xshift=-0.4cm]wi_m\l.west)
        -- (wi_m\l.west);

    \draw[internal] (wi_m\l) -- (mlp\l);
    \draw[internal] (mlp\l) -- (wo_m\l);

    \draw[write] (wo_m\l.east) -| (add_m\l.south);
}

\begin{scope}[on background layer]
    \foreach \l in {1, n} {
        \node[draw=gray!40, fill=gray!5, thick, rounded corners=10pt, dashed,
              fit=(r_a\l) (wi_a\l_2) (wo_a\l_2) (wi_m\l) (wo_m\l) (add_m\l), inner sep=0.2cm] (layer\l_group) {};
        \node[above right, font=\large, text=black!60] at (layer\l_group.north west) {Layer $\l$};
    }
\end{scope}

\end{tikzpicture}
    }
    \caption{Residual-stream view of an $n$-layer decoder-only transformer.
        Attention heads (yellow) and MLP blocks (pink) read from the residual stream through input projections (blue) and write additive updates back through output projections (red).}
    \label{fig:overview_residual_stream}
\end{figure}

We can equivalently unroll the transformer's residual-stream representation into a component-level computation graph, a directed acyclic graph (DAG) in which each node is a model component and directed edges represent possible activation pathways from upstream writes to downstream reads \citep{10.52202/075280-0719,10.52202/079017-0587} (\Cref{fig:overview_computational_DAG}).
Because the residual stream is additive, every downstream component can in principle read information written by any earlier component.
This induces edges not only between adjacent layers, but also between non-adjacent components connected through the residual stream.
The resulting component-level DAG is not intended as an efficient implementation of the transformer, but as a faithful abstraction for reasoning about direct and mediated information flow between components.

Under this abstraction, an edge from an upstream component $*_i$ to a downstream component $*_j$ can be associated with an effective linear map, or ``virtual weight'' \citep{mathematical-framework}.
Using column-vector notation, this virtual weight is given by
\begin{equation}
    W_I^{*_j} W_O^{*_i},
\end{equation}
where $W_O^{*_i}$ is the write projection of the upstream component and $W_I^{*_j}$ is the read projection of the downstream component.
This virtual weight describes how directions written in the residual stream by component $*_i$ are visible to the input subspace read by component $*_j$.
Thus, virtual weights provide a parameter-space explanation for why edges in the component-level computation graph are meaningful, while circuit-localization methods typically intervene on the corresponding activations along these edges.

The virtual weight covers only the residual-stream segment between an upstream write and a downstream read projection; component-internal nonlinearities, such as MLP activations and attention softmax, remain inside the nodes.
Its linearity therefore requires that nothing nonlinear acts on the residual stream between writes and reads.
This holds exactly for the models studied here: all benchmark transformers are Tracr-derived models trained with SIIT \citep{10.52202/079017-2950}, and they inherit the Tracr-compiled architecture, which contains no normalization layers \citep{10.52202/075280-1649}.
In typical transformers, every read is instead preceded by LayerNorm, whose scale and bias can be folded into the read projection but whose per-input rescaling remains nonlinear, so virtual weights there describe write--read interactions only up to an input-dependent rescaling \citep{mathematical-framework}.
Circuit-localization methods treat normalization as part of the downstream component, which keeps the component-level DAG exact; only the linear form of the edge map is specific to models without normalization layers.

\begin{figure}[htb]
    \centering
    \adjustbox{trim=0pt 17pt 0pt 21pt,clip,width=\linewidth}{%
        \begin{tikzpicture}[
    >=Stealth,
    stream/.style={draw=black!10, line width=2pt},
    read/.style={draw=custom3!20, line width=1.2pt, ->, rounded corners=6pt},
    write/.style={draw=custom4!20, line width=1.2pt, ->, rounded corners=6pt},
    internal/.style={draw=black!10, line width=1pt, ->},
    proj/.style={rectangle, draw=custom1!20, fill=custom1!5, thick, rounded corners=3pt, align=center, font=\footnotesize, text=black!20},
    add/.style={circle, draw=black!15, fill=white, thick, minimum size=0.7cm, inner sep=0pt, font=\Large, text=black!15},
    label/.style={text=black!20},
    head/.style={rectangle, draw=custom2!80, fill=custom2!10, thick, rounded corners=3pt, align=center, font=\footnotesize},
    comp/.style={rectangle, draw=custom5!80, fill=custom5!10, thick, rounded corners=3pt, align=center, font=\footnotesize},
    virtual/.style={draw=black!75, line width=1pt, ->},
    vlabel/.style={font=\tiny, text=black!100, fill=custom6!15, fill opacity=0.9, text opacity=1, inner sep=1.5pt, rounded corners=2pt, draw=custom6!100, dashed, line width=0.5pt}
]

\node (x0) [font=\Large\bfseries] {$x_0$};
\node (add_a1) [add, right=2.0cm of x0] {$+$};
\node (add_m1) [add, right=2.5cm of add_a1] {$+$};
\node (x1) [right=0.5cm of add_m1] {};
\node (xn1) [right=1.0cm of x1] {};
\node (add_an) [add, right=2.0cm of xn1] {$+$};
\node (add_mn) [add, right=2.5cm of add_an] {$+$};
\node (xn) [font=\Large\bfseries, right=1.0cm of add_mn] {$x_n$};

\draw[stream] (x0) -- (add_a1);
\draw[stream] (add_a1) -- (add_m1);
\draw[stream] (add_m1) -- (x1) node[above=0.05cm, label, font=\bfseries] {$x_1$};
\draw[stream, dotted] (x1) -- (xn1) node[above=0.05cm, label, font=\bfseries] {$x_{n-1}$};
\draw[stream] (xn1) -- (add_an);
\draw[stream] (add_an) -- (add_mn);
\draw[stream, ->] (add_mn) -- (xn);

\foreach \l/\leftA/\leftM in {1/x0/add_a1, n/xn1/add_an} {

    \coordinate (mid_a\l) at ($(\leftA)!0.7!(add_a\l)$);
    \coordinate (r_a\l) at ($(\leftA.east) + (0.5, 0)$);

    \node (h\l_1) [head, above right=1.3cm and 0.7cm of mid_a\l] {$A_{\l}^1$};
    \node (h\l_2) [head, above=0.8cm of h\l_1] {$A_{\l}^2$};

    \node (wi_a\l_1) [proj, left=0.3cm of h\l_1] {$W_I^{A^1_\l}$};
    \node (wo_a\l_1) [proj, right=0.3cm of h\l_1] {$W_O^{A^1_\l}$};

    \node (wi_a\l_2) [proj, left=0.3cm of h\l_2] {$W_I^{A^2_\l}$};
    \node (wo_a\l_2) [proj, right=0.3cm of h\l_2] {$W_O^{A^2_\l}$};

    \draw[read] (mid_a\l) -- ++(0, 0.8) -| ([xshift=-0.5cm]wi_a\l_1.west) -- (wi_a\l_1.west);
    \draw[read] (mid_a\l) -- ++(0, 0.8) -| ([xshift=-0.5cm]wi_a\l_2.west) -- (wi_a\l_2.west);
    \draw[internal] (wi_a\l_1) -- (h\l_1); \draw[internal] (h\l_1) -- (wo_a\l_1);
    \draw[internal] (wi_a\l_2) -- (h\l_2); \draw[internal] (h\l_2) -- (wo_a\l_2);
    \draw[write] (wo_a\l_1.east) -- ++(0.4, 0) |- ([yshift=0.8cm]add_a\l.center) -- (add_a\l.north);
    \draw[write] (wo_a\l_2.east) -- ++(0.4, 0) |- ([yshift=0.8cm]add_a\l.center) -- (add_a\l.north);

    \coordinate (mid_m\l) at ($(\leftM)!0.5!(add_m\l)$);
    \coordinate (r_m\l) at ($(\leftM.east) + (0.5, 0)$);

    \node (mlp\l) [comp, below left=1.2cm and 0.2cm of mid_m\l] {$\text{MLP}_\l$};
    \node (wi_m\l) [proj, left=0.3cm of mlp\l] {$W_I^{\text{MLP}_\l}$};
    \node (wo_m\l) [proj, right=0.3cm of mlp\l] {$W_O^{\text{MLP}_\l}$};

    \draw[read] (mid_m\l) -- ++(0, -0.6) -| ([xshift=-0.4cm]wi_m\l.west) -- (wi_m\l.west);
    \draw[internal] (wi_m\l) -- (mlp\l); \draw[internal] (mlp\l) -- (wo_m\l);
    \draw[write] (wo_m\l.east) -| (add_m\l.south);
}

\begin{scope}[on background layer]
    \foreach \l in {1, n} {
        \node[draw=gray!20, fill=gray!2, thick, rounded corners=10pt, dashed,
              fit=(r_a\l) (wi_a\l_2) (wo_a\l_2) (wi_m\l) (wo_m\l) (add_m\l), inner sep=0.2cm] (layer\l_group) {};
        \node[above right, font=\large, text=black!30] at (layer\l_group.north west) {Layer $\l$};
    }
\end{scope}

\draw[virtual] (h1_2.east) to[out=20, in=105] node[vlabel, pos=0.5] {$W_U W_O^{A^2_1}$} (xn.north);
\draw[virtual] (hn_2.east) to[out=-10, in=170] node[vlabel, pos=0.5] {$W_U W_O^{A^2_n}$} (xn.west);
\draw[virtual] (h1_1.east) to[out=45, in=120] node[vlabel, pos=0.7] {$W_U W_O^{A^1_1}$} (xn.north west);
\draw[virtual] (hn_1.east) to[out=-20, in=190] node[vlabel, pos=0.65] {$W_U W_O^{A^1_n}$} (xn.west);
\draw[virtual] (mlp1.east) to[out=-20, in=-150] node[vlabel, pos=0.5] {$W_U W_O^{\text{MLP}_1}$} (xn.south west);
\draw[virtual] (mlpn.east) to[out=15, in=-150] node[vlabel, pos=0.5] {$W_U W_O^{\text{MLP}_n}$} (xn.west);

\draw[virtual] (x0.south) to[out=-60, in=-120, looseness=0.6] node[vlabel, pos=0.5, inner sep=3pt] {$W_U W_E$} (xn.south);

\draw[virtual] (h1_1.south) to[out=-105, in=120, looseness=1.5] node[vlabel, pos=0.5] {$W_I^{\text{MLP}_1} W_O^{A^1_1}$} (mlp1.north);
\draw[virtual] (h1_2.south) to[out=-60, in=60, looseness=1.2] node[vlabel, pos=0.5] {$W_I^{\text{MLP}_1} W_O^{A^2_1}$} (mlp1.north);

\draw[virtual] ([yshift=-4pt]x0.north) to[out=80, in=160] ([yshift=-4pt]hn_2.north west);
\draw[virtual] ([yshift=-2pt]x0.north) to[out=80, in=160] ([yshift=-2pt]hn_2.north west);
\draw[virtual] (x0.north) to[out=80, in=160] node[vlabel, pos=0.5] {$W_I^{A^2_n} W_E$} (hn_2.north west);

\draw[virtual] ([yshift=2pt]x0.north) to[out=50, in=160] ([yshift=2pt]hn_1.west);
\draw[virtual] ([yshift=-2pt]x0.north) to[out=50, in=160] ([yshift=-2pt]hn_1.west);
\draw[virtual] (x0.north) to[out=50, in=160] node[vlabel, pos=0.3] {$W_I^{A^1_n} W_E$} (hn_1.west);

\draw[virtual] ([yshift=2pt]x0.north) to[out=45, in=180] ([yshift=2pt]h1_1.west);
\draw[virtual] ([yshift=-2pt]x0.north) to[out=45, in=180] ([yshift=-2pt]h1_1.west);
\draw[virtual] (x0.north) to[out=45, in=180] node[vlabel, pos=0.5] {$W_I^{A^1_1} W_E$} (h1_1.west);

\draw[virtual] ([yshift=2pt]x0.north) to[out=60, in=180] ([yshift=2pt]h1_2.west);
\draw[virtual] ([yshift=-2pt]x0.north) to[out=60, in=180] ([yshift=-2pt]h1_2.west);
\draw[virtual] (x0.north) to[out=60, in=180] node[vlabel, pos=0.5] {$W_I^{A^2_1} W_E$} (h1_2.west);

\draw[virtual] (x0.south) to[out=-45, in=180] node[vlabel, pos=0.5] {$W_I^{\text{MLP}_1} W_E$} (mlp1.west);
\draw[virtual] (x0.south) to[out=-15, in=165] node[vlabel, pos=0.65] {$W_I^{\text{MLP}_n} W_E$} (mlpn.west);

\draw[virtual] ([yshift=2pt]h1_2.east) to[out=20, in=160] ([yshift=2pt]hn_2.west);
\draw[virtual] ([yshift=-2pt]h1_2.east) to[out=20, in=160] ([yshift=-2pt]hn_2.west);

\draw[virtual] ([yshift=2pt]h1_1.east) to[out=30, in=170] ([yshift=2pt]hn_2.west);
\draw[virtual] ([yshift=-2pt]h1_1.east) to[out=30, in=170] ([yshift=-2pt]hn_2.west);
\draw[virtual] (h1_1.east) to[out=30, in=170] node[vlabel, pos=0.35] {$W_I^{A^2_n} W_O^{A^1_1}$} (hn_2.west);

\draw[virtual] ([yshift=2pt]mlp1.east) to[out=45, in=200] ([yshift=2pt]hn_2.west);
\draw[virtual] ([yshift=-2pt]mlp1.east) to[out=45, in=200] ([yshift=-2pt]hn_2.west);
\draw[virtual] (mlp1.east) to[out=45, in=200] node[vlabel, pos=0.25] {$W_I^{A^2_n} W_O^{\text{MLP}_1}$} (hn_2.west);

\draw[virtual] ([yshift=2pt]h1_2.east) to[out=10, in=135] ([yshift=2pt]hn_1.west);
\draw[virtual] ([yshift=-2pt]h1_2.east) to[out=10, in=135] ([yshift=-2pt]hn_1.west);
\draw[virtual] (h1_2.east) to[out=10, in=135] node[vlabel, pos=0.65] {$W_I^{A^1_n} W_O^{A^2_1}$} (hn_1.west);
\draw[virtual] (h1_2.east) to[out=20, in=160] node[vlabel, pos=0.35] {$W_I^{A^2_n} W_O^{A^2_1}$} (hn_2.west);

\draw[virtual] ([yshift=2pt]h1_1.east) to[out=20, in=160] ([yshift=2pt]hn_1.west);
\draw[virtual] ([yshift=-2pt]h1_1.east) to[out=20, in=160] ([yshift=-2pt]hn_1.west);
\draw[virtual] (h1_1.east) to[out=20, in=160] node[vlabel, pos=0.65] {$W_I^{A^1_n} W_O^{A^1_1}$} (hn_1.west);

\draw[virtual] ([yshift=2pt]mlp1.east) to[out=20, in=190] ([yshift=2pt]hn_1.west);
\draw[virtual] ([yshift=-2pt]mlp1.east) to[out=20, in=190] ([yshift=-2pt]hn_1.west);
\draw[virtual] (mlp1.east) to[out=20, in=190] node[vlabel, pos=0.75] {$W_I^{A^1_n} W_O^{\text{MLP}_1}$} (hn_1.west);

\draw[virtual] (mlp1.east) to[out=0, in=180] node[vlabel, pos=0.5] {$W_I^{\text{MLP}_n} W_O^{\text{MLP}_1}$} (mlpn.west);
\draw[virtual] (h1_1.east) to[out=0, in=170] node[vlabel, pos=0.35] {$W_I^{\text{MLP}_n} W_O^{A^1_1}$} (mlpn.west);
\draw[virtual] (h1_2.east) to[out=-10, in=160] node[vlabel, pos=0.35] {$W_I^{\text{MLP}_n} W_O^{A^2_1}$} (mlpn.west);

\draw[virtual] (hn_1.south) to[out=-105, in=120, looseness=1.5] node[vlabel, pos=0.5] {$W_I^{\text{MLP}_n} W_O^{A^1_n}$} (mlpn.north);
\draw[virtual] (hn_2.south) to[out=-60, in=60, looseness=1.2] node[vlabel, pos=0.5] {$W_I^{\text{MLP}_n} W_O^{A^2_n}$} (mlpn.north);

\end{tikzpicture}%
    }
    \caption{Component-level directed acyclic graph induced by the residual stream.
        Edges represent possible residual-stream-mediated activation pathways from upstream writes to downstream reads.
        Attention heads have separate query, key, and value read pathways.}
    \label{fig:overview_computational_DAG}
\end{figure}

The definition of the read projection $W_I^*$ and write projection $W_O^*$ depends on the component class of $*$.
For MLPs of the general form
\begin{equation}
    \mathrm{MLP}(\mathbf{x}) = W_{\mathrm{out}} h(W_{\mathrm{in}}\mathbf{x}),
\end{equation}
where arbitrary hidden layers and nonlinear activations are abstracted into $h$, the input matrix $W_{\mathrm{in}} \in \mathbb{R}^{d_{\mathrm{mlp}} \times d_{\mathrm{model}}}$ and output matrix $W_{\mathrm{out}} \in \mathbb{R}^{d_{\mathrm{model}} \times d_{\mathrm{mlp}}}$ act as the read and write projections, respectively:
\begin{equation}
    W_I^{\mathrm{MLP}} = W_{\mathrm{in}},
    \qquad
    W_O^{\mathrm{MLP}} = W_{\mathrm{out}}.
\end{equation}

Attention heads write to the residual stream through a single output projection $W_O^A$, which plays the same role for a head that $W_{\mathrm{out}}$ plays for an MLP, but read from the stream through three distinct projections that form their queries, keys, and values \citep{mathematical-framework,10.52202/079017-1962}.
Therefore, an upstream component $*_i$ can connect to a downstream attention head $A_j$ through three distinct read pathways, governed by the virtual weights
\begin{align}
    \text{Value pathway:} \quad & W_V^{A_j} W_O^{*_i}, \\
    \text{Query pathway:} \quad & W_Q^{A_j} W_O^{*_i}, \\
    \text{Key pathway:} \quad   & W_K^{A_j} W_O^{*_i}.
\end{align}
These read-side virtual weights should be distinguished from the standard head-internal QK and OV circuits of the Transformer Circuits framework: under the column-vector convention used here, $W_Q^{\top} W_K$ determines how an attention head routes information across token positions, while $W_O W_V$ determines what information the head writes back into the residual stream \citep{mathematical-framework}.

Lastly, input tokens are projected into the residual stream via the embedding matrix $W_E \in \mathbb{R}^{d_{\mathrm{model}} \times d_{\mathrm{vocab}}}$, and the final residual stream state is transformed back to logits via the unembedding matrix $W_U \in \mathbb{R}^{d_{\mathrm{vocab}} \times d_{\mathrm{model}}}$.
All bias terms are omitted from these formulations for simplicity. For details on folding biases into augmented weight matrices, see~\citet{10.48550/arXiv.2511.20273}.

This component-level computation graph is the graph on which circuit-localization methods operate: nodes are transformer components, edges are possible activation pathways between components, and circuit localization aims to identify the sparse subset of edges that is relevant for a given model behavior.

Because \textsc{InterpBench} and the Mechanistic Interpretability Benchmark (MIB) formulate circuit localization over model components and their connections, we use the corresponding component-level computation graph in this preliminary study \citep{10.52202/079017-2950,10.48550/arXiv.2504.13151}.
Other graph representations are possible, such as neuron-level graphs that treat individual neurons as nodes and their connections as edges.
This representation avoids matrix edge features and heterogeneity in node features.
It is, however, a much larger graph, which can exceed billions of edges even for a relatively small LLM.
It is also directed and carries signed edge weights.
Large size, edge direction, and signed weights together make the neuron-level graph a challenging and underexplored setting for graph learning \citep{10.48550/arXiv.2202.10793,10.48550/arXiv.2209.00546,3CcP3VI6LW,10.48550/arXiv.2602.04768}.
We leave the study of graph representations beyond the component-level computation graph to future work, along with the challenges and opportunities they present for bringing graph learning to mechanistic interpretability.

\section{The Graph Circuit Learning Framework in Detail}
\label{app:framework}

This appendix specifies the Graph Circuit Learning architectures used in our experiments.
It covers features, graph transformations, feature alignment, DirGNN and DAGformer graph learning backends, pooling, readout, and the supervised training procedure.

For a model--task case $(\mathcal{M},\mathcal{T})$, let $G_{\mathcal{M}}=(V_{\mathcal{M}},E_{\mathcal{M}})$ be the component-level computation graph defined in \Cref{app:transformer_dag}, and let $\mathbf{y}\in\{0,1\}^{|E_{\mathcal{M}}|}$ be its ground-truth circuit mask.
All architecture variants use the same raw feature construction and produce one logit for each edge in the original order of $E_{\mathcal{M}}$.
They differ in their transformed graph, graph-learning architecture, and connectivity condition.

\subsection{Features}
\label{app:feature-construction}

A task $\mathcal{T}$ supplies a collection of clean--corrupted prompt pairs $(x_b,x'_b)$.
Let $\mathcal{B}_{\mathcal{T}}$ denote the clean--corrupted prompt pairs used to construct features for a model--task case.
For each pair $b\in\mathcal{B}_{\mathcal{T}}$, let $\mathcal{P}_b$ be the output positions where the benchmark compares the model's output with the task-specific target, and let $c=(b,p)$ denote the context for prompt pair $b$ at position $p$. The component-level computation graph and ground-truth mask are fixed within a model--task case, whereas the activations and gradients vary across contexts. The implementation encodes each context $(b,p)$ separately and pools the resulting edge representations only after graph learning.

Separate clean and corrupted forward passes provide component activations.
A backward pass on the clean computation provides gradients of a task-specific scalar feature-extraction loss, denoted by $\ell_{\mathcal{T}}^{\mathrm{feat}}$, specified by the benchmark for each model--task pair. The feature extractor uses KL divergence for tasks with discrete answer distributions, $L_1$ loss for floating-point targets, and, for tasks whose batches expose only class labels rather than full target distributions, the benchmark's logit-difference loss. For component $v$, let $a_{v,c}^{\mathrm{cl}}$ and $a_{v,c}^{\mathrm{co}}$ denote its clean and corrupted vectors at that component's input or output interface, and let
\begin{equation}
    g_{v,c}
    =
    \nabla_{a_{v,c}^{\mathrm{cl}}} \ell_{\mathcal{T}}^{\mathrm{feat}}
    \label{eq:component-gradient}
\end{equation}
denote the corresponding clean-computation gradient.
For each component, we use the vector at the part of the computation represented by that component: attention writers use the per-head vector before the output projection, attention readers use the query, key, or value vector, MLP writers use the hidden vector after the nonlinearity, and MLP readers use the vector before the nonlinearity.

The forward pass records component states rather than a separate vector for every edge in the component-level computation graph.
For an edge $e=(s,t)$, the map from the source component's vector to that edge's contribution to the target component's input is
\begin{equation}
    W_e
    =
    W_I^t W_O^s,
    \qquad
    W_e:
    \mathbb{R}^{d_s}
    \rightarrow
    \mathbb{R}^{d_t},
    \label{eq:virtual-edge-operator}
\end{equation}
under the column-vector convention of \Cref{app:transformer_dag}.
The implementation retains the corresponding read and write factors and applies them successively, so it does not need to materialize every dense matrix $W_e$.
The clean and corrupted pathway messages are
\begin{equation}
    m_{e,c}^{\mathrm{cl}}
    =
    W_e a_{s,c}^{\mathrm{cl}},
    \qquad
    m_{e,c}^{\mathrm{co}}
    =
    W_e a_{s,c}^{\mathrm{co}}.
    \label{eq:edge-messages}
\end{equation}
These are the clean and corrupted edge messages denoted by $z_e^{\mathrm{cl}}$ and $z_e^{\mathrm{co}}$ in \Cref{fig:graph-construction}.

To calculate the clean gradient for each edge, we represent the target component's input by the sum of clean messages from its incoming edges, $u_{t,c}^{\mathrm{lin}}=\sum_{e'=(s',t)}m_{e',c}^{\mathrm{cl}}$.
Since each clean message contributes additively, $\partial u_{t,c}^{\mathrm{lin}}/\partial m_{e,c}^{\mathrm{cl}}=I$ and
\begin{equation}
    \nabla_{m_{e,c}^{\mathrm{cl}}}\ell_{\mathcal{T}}^{\mathrm{feat}}
    =
    g_{t,c}.
    \label{eq:message-gradient}
\end{equation}
To express this gradient with respect to the source component's vector, we compute
\begin{equation}
    \widetilde{g}_{e,c}
    =
    W_e^{\top}g_{t,c}.
    \label{eq:transported-gradient}
\end{equation}
The gradient $g_{t,c}$ is the same for all edges entering $t$, whereas $\widetilde{g}_{e,c}$ also depends on the map $W_e$ for that edge.

For graph learning, we use the following six features for every edge $e=(s,t)$ and each context $c$:
\begin{equation}
    a_{s,c}^{\mathrm{cl}},
    \quad
    a_{s,c}^{\mathrm{co}},
    \quad
    g_{t,c},
    \quad
    m_{e,c}^{\mathrm{cl}},
    \quad
    m_{e,c}^{\mathrm{co}},
    \quad
    \widetilde{g}_{e,c}.
    \label{eq:computation-graph-edge-features}
\end{equation}
The first three vectors are the source component's clean state, corrupted state, and the gradient at the target component.
The final three are the clean and corrupted contributions along the edge and the target gradient mapped back through that edge's linear map.

These features have a simple transformation law under a compatible reparameterization of the transformer by a change of basis at each component interface, following the general parameter-space symmetries studied for neural networks and transformers \citep{10.48550/arXiv.2603.10090,10.48550/arXiv.2506.22712}.
Let $A_v$ be the invertible change of coordinates for component $v$.
If states transform as $a_{v,c}\mapsto A_v a_{v,c}$ and gradients transform as $g_{v,c}\mapsto A_v^{-\top}g_{v,c}$, then the edge operator transforms as $W_e\mapsto A_tW_eA_s^{-1}$ for $e=(s,t)$.
The six features therefore transform as
\begin{equation}
    \begin{aligned}
        a_{s,c}^{\mathrm{cl}},a_{s,c}^{\mathrm{co}}
         & \mapsto
        A_s a_{s,c}^{\mathrm{cl}},A_s a_{s,c}^{\mathrm{co}},
         &
        g_{t,c}
         & \mapsto
        A_t^{-\top}g_{t,c}, \\
        m_{e,c}^{\mathrm{cl}},m_{e,c}^{\mathrm{co}}
         & \mapsto
        A_t m_{e,c}^{\mathrm{cl}},A_t m_{e,c}^{\mathrm{co}},
         &
        \widetilde{g}_{e,c}
         & \mapsto
        A_s^{-\top}\widetilde{g}_{e,c}.
    \end{aligned}
    \label{eq:edge-feature-transport}
\end{equation}
Thus, the feature vectors are covariant representations of component coordinates.
Their covariant roles make basis-invariant scalar combinations available:
\begin{equation}
    \begin{aligned}
        g_{t,c}^{\top}m_{e,c}^{\mathrm{cl}}
         & =
        \widetilde{g}_{e,c}^{\top}a_{s,c}^{\mathrm{cl}},
         &
        g_{t,c}^{\top}m_{e,c}^{\mathrm{co}}
         & =
        \widetilde{g}_{e,c}^{\top}a_{s,c}^{\mathrm{co}}.
    \end{aligned}
    \label{eq:invariant-edge-feature-contractions}
\end{equation}
The graph learner could learn to use such invariant combinations from the raw features.

The next subsection describes the two graph transformations that place these features on graph nodes.

\subsection{Graph Transformations}
\label{app:graph-transformations}

Because the prediction target is an edge mask, both transformations produce one classifiable object for every original component-level edge.
Let $\bar e$ denote the transformed node corresponding to $e\in E_{\mathcal{M}}$.
The two transformations expose different relationships among these candidate pathways.

Both transformations are equivariant to a consistent reindexing of the component-level computation graph.
For a component reindexing $\sigma$, the induced edge reindexing maps $e=(s,t)$ to $\sigma(e)=(\sigma(s),\sigma(t))$.
The directed line graph then maps $\bar e$ to $\overline{\sigma(e)}$.
The incidence graph maps component node $v$ to $\sigma(v)$ and edge node $\bar e$ to $\overline{\sigma(e)}$.
In each case, the transformed adjacency, relation types, attached features, and circuit labels are relabeled in the same way.

\paragraph{Directed line graph.}
The directed line graph \citep{10.1007/BF02854581} has node set
\begin{equation}
    V_{\mathcal{L}}
    =
    \{\bar e:e\in E_{\mathcal{M}}\}.
\end{equation}
For $e_1=(s_1,t_1)$ and $e_2=(s_2,t_2)$, the implementation treats $e_1$ and $e_2$ as composable if either $t_1=s_2$ or $t_1$ is an internal read node and $s_2$ is the corresponding write node of the same attention head or MLP block.
The second case is required because the component-level computation graphs in \textsc{InterpBench} represent an attention head's query, key, or value read separately from its output write, and likewise represent an MLP's input read separately from its output write.
Writing this implemented relation as $e_1\prec e_2$, the directed line-graph adjacency is
\begin{equation}
    E_{\mathcal{L}}^{\mathrm{obs}}
    =
    \{(\bar e_1,\bar e_2):e_1\prec e_2\}.
    \label{eq:implemented-line-adjacency}
\end{equation}
Thus, the directed line graph has $|E_{\mathcal{M}}|$ nodes, and circuit localization becomes binary node classification.

\paragraph{Incidence graph.}
The incidence graph \citep{10.1007/978-3-319-00080-0,10.48550/arXiv.2304.10074} retains the original component-level nodes as nodes while adding one node for each original component-level edge:
\begin{equation}
    V_{\mathcal{I}}
    =
    V_{\mathcal{M}}
    \,\dot{\cup}\,
    \{\bar e:e\in E_{\mathcal{M}}\}.
\end{equation}
The incidence graph has three families of directed edges that encode the source, target, and next relations:
\begin{equation}
    s
    \xrightarrow{\mathrm{source}}
    \bar e,
    \qquad
    \bar e
    \xrightarrow{\mathrm{target}}
    t,
    \qquad
    \bar e_1
    \xrightarrow{\mathrm{next}}
    \bar e_2
    \ \,\text{when}\ \,
    e_1\prec e_2,
    \label{eq:incidence-relations}
\end{equation}
for every $e=(s,t)$.
Only the component-level edge nodes $\bar e$ receive circuit labels and enter the final readout.
Component-level nodes carry component-level features and provide intermediate states through which incident edges can exchange information.
Writing $n=|V_{\mathcal{M}}|$ and $m=|E_{\mathcal{M}}|$, the incidence graph representation has $n+m$ objects and $2m+|E_{\mathcal{L}}^{\mathrm{obs}}|$ directed edges.

Both transformations place the six features on graph nodes.
In the directed line graph, each component-level edge becomes a graph node and receives all six vectors.
In the incidence graph, each edge node receives the final three vectors, while each original component node $v$ receives the component-indexed counterparts of the first three vectors evaluated at $v$ itself, namely its clean state $a_{v,c}^{\mathrm{cl}}$, its corrupted state $a_{v,c}^{\mathrm{co}}$, and its own gradient $g_{v,c}$.

\subsection{Feature Alignment}
\label{app:feature-alignment}

The six features described in the previous section naturally have variable widths and semantics as different transformers typically have different hidden dimensions and learn different hidden representations.
We therefore use a feature alignment module, adapted from the official TabPFN implementation \citep{10.1038/s41586-024-08328-6}, to map every feature to a fixed-width representation of dimension $d_{\mathrm{align}}$.

Let
\begin{equation}
    q^{(\rho)}
    =
    \left(
    q_1^{(\rho)},\ldots,q_{d_\rho}^{(\rho)}
    \right)
    \in
    \mathbb{R}^{d_\rho}
\end{equation}
be a feature vector with role $\rho$ and coordinate identifiers $\kappa_1,\ldots,\kappa_{d_\rho}$.
Each scalar becomes one fixed-width feature token
\begin{equation}
    z_j^{(\rho)}
    =
    A_{\mathrm{val}}q_j^{(\rho)}
    +
    b_{\mathrm{val}}
    +
    c_{\psi}(\kappa_j)
    +
    r(\rho).
    \label{eq:feature-token}
\end{equation}
Here, $A_{\mathrm{val}}q_j^{(\rho)}+b_{\mathrm{val}}$ is a learned scalar-value projection and $r(\rho)$ is a learned role embedding.
The coordinate embedding $c_{\psi}(\kappa_j)$ assigns each graph-wide coordinate identifier a Gaussian random vector and maps that random vector into a fixed-dimensional space through a trainable projection.
The six role embeddings $r(\rho)$, one for each of the six features, are learned from random initialization.
A summary token is appended to the feature-token sequence, and its transformed representation is the fixed-width aligned representation for that feature:
\begin{equation}
    F_{\psi}
    \left(
    q^{(\rho)};
    \rho,
    \boldsymbol{\kappa}
    \right)
    \in
    \mathbb{R}^{d_{\mathrm{align}}}.
    \label{eq:feature-alignment}
\end{equation}

\subsection{Graph Learning Models}
\label{app:graph-learning}

For each graph node $v$ and context $c=(b,p)$, let $\phi_{v,c}$ denote the fixed-width representation formed by aligning and concatenating the features attached to that node, using zeros where a feature is absent.
Let $\bar{\mathcal{G}}_{\mathcal{M}}$ denote the selected graph representation, either a directed line graph or an incidence graph.
For each context $c=(b,p)$, the graph learner processes one copy of $\bar{\mathcal{G}}_{\mathcal{M}}$ with its context-specific features and returns one hidden state for every graph node:
\begin{equation}
    \left\{
    h_{v,b,p}
    \right\}_{v\in V(\bar{\mathcal{G}}_{\mathcal{M}})}
    =
    \operatorname{GraphLearner}_{\theta}
    \left(
    \bar{\mathcal{G}}_{\mathcal{M}},
    \left\{
    \phi_{v,b,p}
    \right\}_{v\in V(\bar{\mathcal{G}}_{\mathcal{M}})}
    \right).
    \label{eq:context-graph-learning}
\end{equation}
The parameters are shared across edges, clean--corrupted prompt pairs, positions, graph sizes, and training cases.

\paragraph{DirGNN}
Our first graph learning backend uses PyTorch Geometric's \citep{10.48550/arXiv.1903.02428,10.48550/arXiv.2507.16991} DirGNN \citep{10.48550/arXiv.2305.10498} construction with a GraphConv aggregator \citep{10.1609/aaai.v33i01.33014602}, which \Cref{fig:heldout-auroc} and \Cref{tab:additional-results} abbreviate as DirGraphConv.
It additively aggregates incoming neighbors in the observed direction and in the reversed direction, and combines the two neighbor results with equal weight $\alpha=0.5$.
Following the block design of Scalable Message Passing Neural Networks \citep{10.48550/arXiv.2411.00835}, each layer applies the directed graph-convolution update and a per-node feed-forward transformation in separate pre-LayerNorm residual branches, with SiLU activations and learned residual scales.

\paragraph{DAGformer}
Our second graph learning backend uses DAGformer, a transformer design for directed acyclic graphs \citep{10.52202/075280-2070}.
For each graph node, the attention calculation considers only the nodes permitted by the selected graph's directed edges.
It assigns weights to those connected nodes using dot products and a softmax, and does not use information from any other graph nodes.

We evaluate a factorized variant of DAGformer in three graph settings: the graph's original directed edges, no message-passing edges, and a complete directed acyclic graph in which every earlier node in a topological order is connected to every later node.
We use this factorized variant rather than the quadratic variant for this complete graph because it makes the resulting all-to-all predecessor attention tractable.
It approximates softmax attention with random features, following Performer \citep{10.48550/arXiv.2009.14794} and related linear-attention methods \citep{10.48550/arXiv.2006.16236}.
Rather than calculate attention scores separately along the directed edges that enter each node, it maps queries and keys through a fixed random projection and combines the corresponding keys and values across those incoming edges.
For each attention head and graph node $i$, let $P(i)$ denote the nodes permitted to send information to $i$.
Written in the standard linear-attention convention, in which the output is a row vector rather than the column vector of \Cref{app:transformer_dag}, the attention output is
\begin{equation}
    o_i =
    \begin{cases}
        \dfrac{
            \varphi_q(q_i)^\top
            \left(\sum_{j \in P(i)} \varphi_{k,P(i)}(k_j) v_j^\top\right)
        }{
            \varphi_q(q_i)^\top
            \left(\sum_{j \in P(i)} \varphi_{k,P(i)}(k_j)\right)
        },
         & P(i) \neq \emptyset,
        \\[1.5ex]
        0,
         & P(i) = \emptyset.
    \end{cases}
\end{equation}
Here, $\varphi_q$ and $\varphi_{k,P(i)}$ are the query and key feature maps; the key map is normalized separately over $P(i)$ for numerical stability.
Both feature maps add a fixed positive constant to every random feature, so the denominator is strictly positive whenever $P(i)$ is nonempty.
The second case is stated separately because $P(i)$ can be empty under each of the three connectivity conditions, and the implementation returns exactly zero at such nodes without evaluating the ratio.
When we keep the graph's original directed edges, $P(i)$ contains every node with an edge into $i$, and $P(i)$ is empty at every node with no incoming edge.
In the complete directed acyclic graph, $P(i)$ contains every node earlier than $i$ in a topological order, and the two sums are updated while traversing that order rather than materializing every edge from an earlier node to a later node; $P(i)$ is empty at the first node of that order.
In the control with all message-passing edges removed, we keep the graph nodes and all of their features but remove every directed edge, so $P(i)$ is empty at every node.
Each node's attention output is then zero, and the layer reduces to a transformation applied separately to each node.
This control therefore measures the value of message passing on the observed graph.

\subsection{Pooling, Readout, and Training}
\label{app:training}

After graph learning, we obtain a representation $h_{e,b,p}$ for every edge $e\in E_{\mathcal{M}}$, retained clean--corrupted prompt pair $b$, and output position $p\in\mathcal{P}_b$.
For either graph transformation, $h_{e,b,p}$ is the state of the graph node associated with $e$.

We use Deep Sets \citep{10.48550/arXiv.1703.06114} to pool the context-specific representations for each edge.
We first pool representations across the output token positions where the benchmark compares the model's output with the task-specific target, then pool the result across the clean--corrupted prompt pairs used to construct the features.
We write $h_{i,e}$ for the representation of edge $e$ in case $i$ after both pooling stages.

A shared linear readout maps each pooled edge representation to one circuit-membership logit
\begin{equation}
    \ell_{i,e}
    =
    w_{\mathrm{out}}^{\top}h_{i,e}
    +
    b_{\mathrm{out}},
    \qquad
    \hat y_{i,e}
    =
    \sigma(\ell_{i,e}).
    \label{eq:circuit-readout}
\end{equation}

For training case $i$, let $C_i\subseteq E_{\mathcal{M}_i}$ be the set of edges in the ground-truth circuit.
For each edge $e\in E_{\mathcal{M}_i}$, let $y_{i,e}=1$ when $e\in C_i$ and $y_{i,e}=0$ otherwise.
Let $m_i=|E_{\mathcal{M}_i}|$, $n_i^{\mathrm{circuit}}=|C_i|$, and $n_i^{\mathrm{other}}=|E_{\mathcal{M}_i}\setminus C_i|$, and define
\begin{equation}
    w_i^{\mathrm{circuit}}
    =
    \max
    \left(
    1,
    \frac{n_i^{\mathrm{other}}}{\max(n_i^{\mathrm{circuit}},1)}
    \right).
    \label{eq:circuit-class-weight}
\end{equation}
The implemented binary cross-entropy-with-logits objective can equivalently be written as
\begin{equation}
    \mathcal{L}_i
    =
    -
    \frac{1}{m_i}
    \sum_{e\in E_{\mathcal{M}_i}}
    \left[
        w_i^{\mathrm{circuit}}y_{i,e}\log\sigma(\ell_{i,e})
        +
        (1-y_{i,e})\log\left(1-\sigma(\ell_{i,e})\right)
        \right].
    \label{eq:training-objective}
\end{equation}
The mean is therefore taken over all computation-graph edges in the current case, and edges in the ground-truth circuit receive extra weight only when they are fewer than the other edges.

\section{Adapting PGExplainer to Circuit Localization}
\label{app:pgexplainer}

Figure~\ref{fig:pgexplainer-adaptation} illustrates our adaptation of PGExplainer \citep{10.48550/arXiv.2011.04573}  from its original GNN explainability application to circuit localization.

\begin{figure}[!htb]
    \centering
    \begin{tikzpicture}[
        >=Stealth,
        font=\small,
        panel/.style={
                rounded corners=6pt,
                draw=black!55,
                fill=black!2,
                inner sep=5pt
            },
        input/.style={
                rounded corners=4pt,
                draw=custom1!85!black,
                fill=custom1!8,
                align=center,
                minimum width=2.55cm,
                minimum height=1.0cm
            },
        explainer/.style={
                rounded corners=4pt,
                draw=custom2!85!black,
                fill=custom2!10,
                align=center,
                minimum width=1.55cm,
                minimum height=1.0cm
            },
        mask/.style={
                rounded corners=4pt,
                draw=custom5!85!black,
                fill=custom5!10,
                align=center,
                minimum width=1.75cm,
                minimum height=1.0cm
            },
        model/.style={
                rounded corners=4pt,
                draw=custom3!85!black,
                fill=custom3!10,
                align=center,
                minimum width=2.25cm,
                minimum height=1.0cm
            },
        output/.style={
                rounded corners=4pt,
                draw=custom4!85!black,
                fill=custom4!8,
                align=center,
                minimum width=1.6cm,
                minimum height=0.8cm
            },
        note/.style={
                rounded corners=3pt,
                draw=custom8!85,
                fill=custom8!8,
                dashed,
                align=center,
                minimum width=2.1cm,
                minimum height=0.7cm
            },
        flow/.style={->, thick, draw=black!70},
        feedback/.style={->, thick, dashed, draw=custom4!85!black}
    ]

    \node[panel, minimum width=14cm, minimum height=4.3cm] (top-panel) at (7.5, 1.4) {};
    \node[anchor=west, font=\bfseries] at (0.75, 3.1) {PGExplainer for graph neural networks};

    \node[input] (gnn-input) at (2.15, 1.75) {input graph $G$\\and\\node representations\\from the trained GNN};
    \node[explainer] (gnn-mlp) at (4.95, 1.75) {MLP};
    \node[mask] (gnn-mask) at (7.4, 1.75) {probability of\\keeping each edge};
    \node[model] (gnn-model) at (10.35, 1.75) {GNN with sampled\\edge mask};
    \node[output] (gnn-output) at (13.15, 1.75) {prediction with\\edge mask};
    \node[note] (gnn-loss) at (11.15, 0.45) {preserve the\\original prediction};

    \draw[flow] (gnn-input) -- (gnn-mlp);
    \draw[flow] (gnn-mlp) -- (gnn-mask);
    \draw[flow] (gnn-mask) -- (gnn-model);
    \draw[flow] (gnn-model) -- (gnn-output);
    \draw[feedback] (gnn-output.south) |- (gnn-loss.east);
    \draw[feedback] (gnn-loss.south) -- ++(0, -0.45) -| (gnn-mlp.south);

    \node[panel, minimum width=14cm, minimum height=4.4cm] (bottom-panel) at (7.5, -3.25) {};
    \node[anchor=west, font=\bfseries] at (0.75, -1.55) {Adaptation of PGExplainer for circuit localization};

    \node[input] (transformer-input) at (2.15, -2.85) {computation graph\\and\\fixed description of\\the transformation\\on each edge};
    \node[explainer] (transformer-mlp) at (4.95, -2.85) {MLP};
    \node[mask] (transformer-mask) at (7.4, -2.85) {probability of\\keeping each edge};
    \node[model] (transformer-model) at (10.25, -2.85) {transformer with\\sampled edge mask\\on clean inputs};
    \node[output] (transformer-output) at (13.15, -2.85) {prediction with\\edge mask};
    \node[note] (transformer-loss) at (11.15, -4.35) {preserve the clean\\model output};
    \node[note, minimum width=2.1cm] (corrupted) at (8.1, -4.35) {use corrupted\\activations when\\an edge is masked};
    \node[note, minimum width=2.25cm] (labels) at (2.0, -4.70) {ground-truth circuit\\not used to train\\this MLP};

    \draw[flow] (transformer-input) -- (transformer-mlp);
    \draw[flow] (transformer-mlp) -- (transformer-mask);
    \draw[flow] (transformer-mask) -- (transformer-model);
    \draw[flow] (transformer-model) -- (transformer-output);
    \draw[flow] (corrupted.north) -- (transformer-model.south);
    \draw[feedback] (transformer-output.south) |- (transformer-loss.east);
    \draw[feedback] (transformer-loss.south) -- ++(0,-0.3) -| (transformer-mlp.south);

    \node[font=\scriptsize, text=black!65, align=center] at (7.5, -5.85) {Solid arrows show the calculation. Dashed arrows show how the MLP is trained.};
\end{tikzpicture}
    \caption{
        Comparison of the original PGExplainer procedure and our adaptation to circuit localization. In the upper panel, PGExplainer~\citep{10.48550/arXiv.2011.04573} scores each graph edge using representations produced by a trained GNN for the edge's two endpoint nodes.  In the lower panel, our adaptation scores each computation-graph edge using a fixed representation of its associated linear transformation. In both settings, the scores define a sampled edge mask applied while the fixed predictor runs. In the transformer adaptation, masking an edge replaces its contribution with the corresponding corrupted activation. Ground-truth circuit annotations are not used to fit an individual explainer.}
    \label{fig:pgexplainer-adaptation}
\end{figure}

\section{An Augmented \textsc{InterpBench} Benchmark for Circuit Localization Across Model--Task Pairs}
\label{app:benchmark}

\subsection{Creating Additional Model--Task Pairs}
\label{app:caseconstruction}

\textsc{TracrBench} pairs RASP programs with input--output examples for sequence-processing tasks \citep{10.48550/arXiv.2409.13714}.
It does not provide the low-level transformers trained with Strict Interchange Intervention Training (SIIT) or the query/key/value-separated connection masks required by our \textsc{InterpBench}-based circuit-localization setting.
For each candidate task, we compile its RASP program with \textsc{Tracr} to obtain a high-level transformer and its ground-truth circuit. We then train a corresponding low-level transformer through \textsc{InterpBench}'s SIIT procedure so that it implements the same computation while matching \textsc{InterpBench}'s architectural conventions and producing parameter distributions closer to those of conventionally trained models. We convert the resulting ground-truth circuit to the same query/key/value-separated connection-level definition used for \textsc{InterpBench} cases.

We reload each trained transformer from its saved weights and check that it reproduces the source program's behavior on held-out inputs. We exclude candidates whose programs do not compile or whose trained transformers fail this reconstruction check.
Of the 48 additional candidate pairs constructed in this way---the model--task pairs from \textsc{TracrBench} that do not already appear in \textsc{InterpBench}---30 pass both the behavior and the compilation checks and form the additional pool reported in \Cref{sec:benchmark}.

\subsection{Training and Evaluation Splits That Limit Leakage Across Model--Task Pairs}
\label{app:benchmark-split}

To test whether methods transfer to new model--task pairs, we keep related pairs on the same side of each training and evaluation split.
We treat two model--task pairs as related when they match under at least one of the following checks:
\begin{enumerate}
    \item \textbf{RASP program templates.} After standardizing function, helper, argument, and local names; removing docstrings, decorators, annotations, and type comments; and replacing string and numeric literal values with placeholders, two pairs match when their program templates are identical.
    \item \textbf{Exact ground-truth circuits.} Two pairs match when their ground-truth circuits contain the same named model components, including components with no circuit connection, and the same directed component edges labeled as part of the circuit.
    \item \textbf{Circuit structure.} When two circuits are not exact matches, we use directed graph isomorphism. Two pairs match when their circuit graphs can be put into one-to-one correspondence while preserving edge directions and broad component categories, but ignoring individual component names.
\end{enumerate}
We treat these matches as transitive, so any two pairs joined by a sequence of matches belong to the same group.

We apply this split to 114 model--task pairs---84 \textsc{InterpBench} pairs and the 30 additional pairs described in \Cref{app:caseconstruction}.
\textsc{InterpBench} also contains two indirect-object-identification cases (IOI and IOI next-token), which are implemented directly as transformer models rather than compiled from RASP programs.
We use neither, so 84 of its 86 model--task pairs enter the pool.
We reserve the same 16 model--task pairs on which \textsc{InterpBench} evaluated its circuit-localization baselines as our held-out test set.
We then remove every group that contains a test pair, which discards 48 further pairs.
These include one pair that uses the same program as a test pair and differs from it only in its vocabulary and maximum sequence length, so the first check places the two in the same group.
The remaining 50 model--task pairs form 31 groups.
We use them for 5-fold cross-validation, assigning each group as a whole to one validation fold and using the other four folds for training.
\Cref{tab:graph-corpus-split} reports the size of each fold and of the held-out test set, how many of their pairs come from \textsc{InterpBench} and from \textsc{TracrBench}, and how many groups of related pairs each contains.

\begin{table}[htbp]
    \centering
    \caption{
        Model--task pairs used for 5-fold cross-validation and the held-out test set.
        Each validation fold is used once for validation, while the other four folds are used for training.
    }
    \label{tab:graph-corpus-split}
    \resizebox{\textwidth}{!}{%
        \begin{tabular}{lrrrr}
            \toprule
            Subset                & Model--task pairs & Original \textsc{InterpBench} pairs & \textsc{TracrBench}-derived pairs & Groups of related pairs \\
            \midrule
            Cross-validation pool & 50                & 26                                  & 24                       & 31                      \\
            Validation fold 0     & 11                & 7                                   & 4                        & 6                       \\
            Validation fold 1     & 11                & 4                                   & 7                        & 6                       \\
            Validation fold 2     & 10                & 3                                   & 7                        & 6                       \\
            Validation fold 3     & 9                 & 6                                   & 3                        & 6                       \\
            Validation fold 4     & 9                 & 6                                   & 3                        & 7                       \\
            Held-out test set     & 16                & 16                                  & 0                        & 10                      \\
            \bottomrule
        \end{tabular}%
    }
\end{table}

\Cref{tab:graph-corpus-component-sizes,tab:graph-corpus-ground-truth-circuit-sizes} describe the component-level computation graphs and ground-truth circuits represented in the cross-validation pool and held-out test set, in terms of their numbers of nodes and edges.

The three checks above keep the specific kinds of program and circuit similarity they identify from crossing between the held-out test set and the cross-validation pool.
They do not give a complete theoretical account of when two model--task pairs should be treated as related; developing one is an important direction for future work.

\begin{table*}[!h]
    \centering
    \footnotesize
    \setlength{\tabcolsep}{3pt}
    \caption{
    Sizes of the component-level computation graphs in each validation fold and in the held-out test set.
    Each ``median [$Q_1$--$Q_3$]'' entry gives the median and the interval from the first to the third quartile across the model--task pairs in that row.
    Each ``min--max'' entry gives the full range across the same pairs.
    }
    \label{tab:graph-corpus-component-sizes}
    \begin{tabular}{@{}lrrrrr@{}}
        \toprule
                                   & \multicolumn{2}{c}{Number of component nodes} & \multicolumn{2}{c}{Number of directed component edges}                                     \\
        Set                        & Median [$Q_1$--$Q_3$]                         & Min--max                                               & Median [$Q_1$--$Q_3$] & Min--max  \\
        \midrule
        Fold 0 ($n=11$)            & 38 [38--56]                                   & 38--74                                                 & 110 [110--262]        & 110--479  \\
        Fold 1 ($n=11$)            & 56 [56--56]                                   & 56--128                                                & 262 [262--262]        & 262--1520 \\
        Fold 2 ($n=10$)            & 38 [38--38]                                   & 38--74                                                 & 110 [110--110]        & 110--479  \\
        Fold 3 ($n=9$)             & 38 [38--56]                                   & 38--56                                                 & 110 [110--262]        & 110--262  \\
        Fold 4 ($n=9$)             & 38 [38--56]                                   & 38--110                                                & 110 [110--262]        & 110--1108 \\
        Held-out test set ($n=16$) & 38 [38--38]                                   & 38--74                                                 & 110 [110--110]        & 110--479  \\
        \bottomrule
    \end{tabular}

\end{table*}

\begin{table*}[!h]
    \centering
    \footnotesize
    \setlength{\tabcolsep}{3pt}
    \caption{
    Sizes of the ground-truth circuits in each validation fold and in the held-out test set.
    Each ``median [$Q_1$--$Q_3$]'' entry gives the median and the interval from the first to the third quartile across the model--task pairs in that row.
    Each ``min--max'' entry gives the full range across the same pairs.
    }
    \label{tab:graph-corpus-ground-truth-circuit-sizes}
    \begin{tabular}{@{}lrrrrr@{}}
        \toprule
                                   & \multicolumn{2}{c}{Number of circuit nodes} & \multicolumn{2}{c}{Number of circuit edges}                                    \\
        Set                        & Median [$Q_1$--$Q_3$]                       & Min--max                                    & Median [$Q_1$--$Q_3$] & Min--max \\
        \midrule
        Fold 0 ($n=11$)            & 5 [5--13]                                   & 5--17                                       & 3 [3--9]              & 3--10    \\
        Fold 1 ($n=11$)            & 13 [13--15]                                 & 13--28                                      & 9 [9--10]             & 9--22    \\
        Fold 2 ($n=10$)            & 7 [2--11]                                   & 2--23                                       & 5 [1--7.25]           & 1--17    \\
        Fold 3 ($n=9$)             & 8 [6--14]                                   & 4--18                                       & 4 [4--9]              & 3--12    \\
        Fold 4 ($n=9$)             & 9 [6--11]                                   & 6--34                                       & 6 [4--6]              & 3--26    \\
        Held-out test set ($n=16$) & 2 [2--8]                                    & 2--14                                       & 1 [1--5.75]           & 1--16    \\
        \bottomrule
    \end{tabular}
\end{table*}

\section{Shared-Target Edge Interactions}
\label{app:theory}

Here, we motivate the graph-based design of GCL through an analysis of pairwise edge interactions.
We first define interactions between arbitrary edge interventions.
We then consider the specific case of two edges entering the same component input, derive an exact expression for their interaction, and obtain a local second-order approximation.
Finally, we discuss the scope of the result and explain how the graph constructions expose the relevant edge relationships.

\subsection{Pairwise Edge Interactions}

Circuit-localization methods commonly assign one scalar importance score to each edge.
Such scores may contain contextual information, but they do not explicitly represent whether the effect of one edge intervention depends on another.

Fix a clean--corrupted prompt pair $(x,x')\sim\mathcal{T}$.
For a set of jointly patched edges $S\subseteq E_{\mathcal{M}}$, let
$\mathbf{y}^{(S)}\in\{0,1\}^{|E_{\mathcal{M}}|}$
denote the mask with $y_e^{(S)}=0$ if $e\in S$ and $y_e^{(S)}=1$ otherwise.
Using the masked task metric defined in \Cref{sec:background}, the signed effect of jointly patching these edges is
\begin{equation}
    \delta_S
    =
    L_{\mathcal{T}}
    \Bigl(
    \mathcal{M},
    x,
    x';
    \mathbf{y}^{(S)}
    \Bigr)
    -
    L_{\mathcal{T}}
    \Bigl(
    \mathcal{M},
    x,
    x';
    \mathbf{1}
    \Bigr).
    \label{eq:joint-intervention-effect}
\end{equation}
For two edges, we define their interaction as the difference between their joint effect and the sum of their individual effects:
\begin{equation}
    I(e_1,e_2)
    =
    \delta_{\{e_1,e_2\}}
    -
    \delta_{\{e_1\}}
    -
    \delta_{\{e_2\}}.
    \label{eq:edge-interaction}
\end{equation}
If $I(e_1,e_2)=0$, the joint intervention effect equals the sum of the individual effects.
A nonzero value indicates that the effect of patching one edge changes when the other edge is patched as well.

Such non-additivity can arise through different mechanisms.
For composable edges, patching a downstream edge may overwrite or screen off an effect propagated from an upstream intervention.
For causally parallel edges, neither intervention changes the message naturally produced by the other, but their effects may still interact through subsequent nonlinear computation.
We analyze a particularly clear parallel setting in which two messages are added at the same component input.

\subsection{Interactions at a Shared Read Input}

Fix a token position $p$ in the prompt pair $(x,x')$.
Consider two edges
\begin{equation}
    e_1=(s_1,t),
    \qquad
    e_2=(s_2,t)
\end{equation}
whose messages enter the same read input of component $t$ at this position.
For each edge $e_i$, let
\begin{equation}
    \Delta m_{e_i,p}
    =
    m_{e_i,p}^{\mathrm{co}}
    -
    m_{e_i,p}^{\mathrm{cl}}
    \label{eq:shared-target-message-change}
\end{equation}
denote the change in its transmitted message under patching.
The construction of these messages is described in \Cref{app:feature-construction}.

Because both messages enter the same read input, the clean input to component $t$ can be written as
\begin{equation}
    u_{t,p}^{\mathrm{cl}}
    =
    m_{e_1,p}^{\mathrm{cl}}
    +
    m_{e_2,p}^{\mathrm{cl}}
    +
    r_{t,p}^{\mathrm{cl}},
    \label{eq:shared-read-input}
\end{equation}
where $r_{t,p}^{\mathrm{cl}}$ contains all remaining contributions to this input.
This decomposition assumes that the read-side transformations represented by the component-level computation graph are included consistently in the edge messages.
Any remaining read-side nonlinearity is treated as part of the downstream computation.

Patching $e_1$, $e_2$, or both therefore changes the shared input to
\begin{equation}
    u_{t,p}^{\mathrm{cl}}+\Delta m_{e_1,p},
    \qquad
    u_{t,p}^{\mathrm{cl}}+\Delta m_{e_2,p},
    \qquad
    u_{t,p}^{\mathrm{cl}}
    +\Delta m_{e_1,p}
    +\Delta m_{e_2,p},
\end{equation}
respectively.
This setting is mathematically convenient because the two interventions are fixed additive changes to the same variable.

To isolate the consequences of changing this shared input, let $L_{t,p}(u)$ denote the task metric obtained by replacing the read input of component $t$ at position $p$ by $u$ and continuing the downstream computation normally.
Thus, $L_{t,p}$ is the original task metric viewed as a function of one internal model input.

Let $I_p(e_1,e_2)$ denote the interaction between the corresponding interventions localized to position $p$. $I_p$ is the analogue of \Cref{eq:edge-interaction} for interventions restricted to the position-$p$ read input; patching an edge in \Cref{eq:edge-interaction} changes read inputs at every position, so we study the position-localized quantity directly rather than deriving it
from $I$. Using the three possible patched inputs above, this interaction is:
\begin{align}
    I_p(e_1,e_2)
    ={} &
    L_{t,p}
    \left(
    u_{t,p}^{\mathrm{cl}}
    +
    \Delta m_{e_1,p}
    +
    \Delta m_{e_2,p}
    \right)
    \nonumber \\
        & -
    L_{t,p}
    \left(
    u_{t,p}^{\mathrm{cl}}
    +
    \Delta m_{e_1,p}
    \right)
    -
    L_{t,p}
    \left(
    u_{t,p}^{\mathrm{cl}}
    +
    \Delta m_{e_2,p}
    \right)
    \nonumber \\
        & +
    L_{t,p}
    \left(
    u_{t,p}^{\mathrm{cl}}
    \right).
    \label{eq:position-localised-interaction}
\end{align}
This expression measures how far the joint effect of the two message changes differs from the sum of their individual effects.

We assume that $L_{t,p}$ is twice continuously differentiable in a neighborhood containing the intervention surface defined in \Cref{eq:interaction-surface} and that its Hessian is locally Lipschitz. Next, we derive an exact Hessian representation of \Cref{eq:position-localised-interaction}, from which the local approximation in \Cref{prop:shared-target} follows.

\subsection{Exact Interaction Identity}

For compactness, let
\begin{equation}
    d_1
    =
    \Delta m_{e_1,p},
    \qquad
    d_2
    =
    \Delta m_{e_2,p},
    \qquad
    u_0
    =
    u_{t,p}^{\mathrm{cl}}.
\end{equation}
We interpolate the strength of the two interventions independently using $s,r\in[0,1]$ and define
\begin{equation}
    \Psi(s,r)
    =
    L_{t,p}
    \left(
    u_0
    +
    s d_1
    +
    r d_2
    \right).
    \label{eq:intervention-surface}
\end{equation}
The four corners of this intervention surface represent the clean computation, the two individual interventions, and the joint intervention:
\begin{align}
    \Psi(0,0)
     & =
    L_{t,p}(u_0),     \\
    \Psi(1,0)
     & =
    L_{t,p}(u_0+d_1), \\
    \Psi(0,1)
     & =
    L_{t,p}(u_0+d_2), \\
    \Psi(1,1)
     & =
    L_{t,p}(u_0+d_1+d_2).
\end{align}
It follows from \Cref{eq:position-localised-interaction} that
\begin{equation}
    I_p(e_1,e_2)
    =
    \Psi(1,1)
    -
    \Psi(1,0)
    -
    \Psi(0,1)
    +
    \Psi(0,0).
    \label{eq:interaction-surface}
\end{equation}

Applying the fundamental theorem of calculus with respect to $r$ gives
\begin{align}
    \Psi(1,1)-\Psi(1,0)
     & =
    \int_0^1
    \frac{\partial\Psi}{\partial r}(1,r)
    \,\mathrm{d}r, \\
    \Psi(0,1)-\Psi(0,0)
     & =
    \int_0^1
    \frac{\partial\Psi}{\partial r}(0,r)
    \,\mathrm{d}r.
\end{align}
Subtracting these expressions and applying the fundamental theorem of calculus with respect to $s$ yields
\begin{equation}
    I_p(e_1,e_2)
    =
    \int_0^1
    \int_0^1
    \frac{\partial^2\Psi}{\partial s\,\partial r}(s,r)
    \,\mathrm{d}s\,\mathrm{d}r.
    \label{eq:integrated-mixed-partial}
\end{equation}
By the chain rule,
\begin{equation}
    \frac{\partial^2\Psi}{\partial s\,\partial r}(s,r)
    =
    d_1^\top
    \nabla_u^2 L_{t,p}
    \left(
    u_0+s d_1+r d_2
    \right)
    d_2.
\end{equation}
Consequently, the interaction has the exact representation
\begin{equation}
    I_p(e_1,e_2)
    =
    \int_0^1
    \int_0^1
    d_1^\top
    \nabla_u^2 L_{t,p}
    \left(
    u_0+s d_1+r d_2
    \right)
    d_2
    \,\mathrm{d}s\,\mathrm{d}r.
    \label{eq:exact-shared-target-interaction}
\end{equation}
The finite interaction is therefore determined by the mixed curvature of the task metric over the complete two-intervention surface.
This exact identity also shows why the two edges must be considered jointly: the relevant term depends on both intervention directions $d_1$ and $d_2$.

\subsection{Local Approximation}

The exact identity depends on the Hessian throughout the intervention surface.
For sufficiently small message changes, this Hessian can be approximated by its value at the clean read input.
Define
\begin{equation}
    Q_{t,p}
    =
    \nabla_u^2 L_{t,p}
    \left(
    u_{t,p}^{\mathrm{cl}}
    \right).
    \label{eq:target-hessian}
\end{equation}
The matrix $Q_{t,p}$ describes how the sensitivity of the task metric changes locally when the shared read input of component $t$ is varied.

The following proposition states that the leading local pairwise interaction is obtained by applying this curvature to the two message changes.

\begin{proposition}[Shared-target edge interactions]
    \label{prop:shared-target}
    If the intervention surface remains within a neighborhood on which the Hessian of $L_{t,p}$ is Lipschitz, then
    \begin{equation}
        I_p(e_1,e_2)
        =
        \Delta m_{e_1,p}^{\top}
        Q_{t,p}
        \Delta m_{e_2,p}
        +
        O\!\left(
        \left(
            \|\Delta m_{e_1,p}\|
            +
            \|\Delta m_{e_2,p}\|
            \right)^3
        \right).
        \label{eq:shared-target-interaction}
    \end{equation}
\end{proposition}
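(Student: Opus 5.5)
We start from the exact identity \Cref{eq:exact-shared-target-interaction}, which already expresses $I_p(e_1,e_2)$ as the average of $d_1^\top \nabla_u^2 L_{t,p}(u_0+sd_1+rd_2)\,d_2$ over the unit square. Adding and subtracting $Q_{t,p}=\nabla_u^2 L_{t,p}(u_0)$ inside the integral, and using that the square has unit area, splits the interaction into the leading term and a remainder:
\begin{equation*}
    I_p(e_1,e_2)
    =
    d_1^\top Q_{t,p} d_2
    +
    R,
    \qquad
    R
    =
    \int_0^1\int_0^1
    d_1^\top
    \bigl(
    \nabla_u^2 L_{t,p}(u_0+sd_1+rd_2)-Q_{t,p}
    \bigr)
    d_2
    \,\mathrm{d}s\,\mathrm{d}r .
\end{equation*}
The proof then reduces to showing $|R| = O\bigl((\|d_1\|+\|d_2\|)^3\bigr)$.

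To bound $R$, we use the hypothesis that the Hessian is Lipschitz, with some constant $K$, on a neighborhood containing the whole intervention surface $\{u_0+sd_1+rd_2 : s,r\in[0,1]\}$; this surface is convex and contains $u_0$. For each $(s,r)$ the Lipschitz bound, in operator norm, gives
\begin{equation*}
    \bigl\|\nabla_u^2 L_{t,p}(u_0+sd_1+rd_2)-Q_{t,p}\bigr\|
    \le
    K\|sd_1+rd_2\|
    \le
    K(\|d_1\|+\|d_2\|).
\end{equation*}
Cauchy--Schwarz then bounds the integrand by $K(\|d_1\|+\|d_2\|)\|d_1\|\|d_2\|$. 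Since $\|d_1\|\|d_2\|\le(\|d_1\|+\|d_2\|)^2$, integrating over the unit square yields $|R|\le K(\|d_1\|+\|d_2\|)^3$. Substituting $d_i=\Delta m_{e_i,p}$ and $u_0=u_{t,p}^{\mathrm{cl}}$ gives \Cref{eq:shared-target-interaction}.

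The only delicate point is the meaning of the $O(\cdot)$ and the uniformity of the constant. We plan to state explicitly that the implied constant is the Lipschitz constant $K$ of the Hessian on the given neighborhood. This makes the bound non-asymptotic whenever the surface stays inside that neighborhood. Under the weaker main-text assumption of only \emph{local} Lipschitzness, we would restrict to message changes small enough that the surface lies in a fixed closed ball around $u_0$ on which a single Lipschitz constant holds. The requirement that $L_{t,p}$ be $C^2$ along the surface is already what justifies the exact identity, via the chain rule and Fubini for continuous integrands.
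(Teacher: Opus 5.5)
Your proposal is correct and follows essentially the same route as the paper. Both arguments start from the exact double-integral identity and replace the Hessian on the intervention surface by $Q_{t,p}$ plus a Lipschitz remainder. Both then bound that remainder by $\|d_1\|\|d_2\|(\|d_1\|+\|d_2\|)\le(\|d_1\|+\|d_2\|)^3$; your version just makes explicit that the implied constant is the Hessian's Lipschitz constant $K$, which the paper leaves inside the $O(\cdot)$.
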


The proposition has a direct interpretation.
The first term measures whether the downstream computation responds differently when the two incoming messages are changed together rather than separately.
If this term is nonzero, the leading local interaction is second-order and depends on the identities of both incoming edges.

\begin{proof}
    From \Cref{eq:exact-shared-target-interaction},
    \begin{equation}
        I_p(e_1,e_2)
        =
        \int_0^1
        \int_0^1
        d_1^\top
        \nabla_u^2 L_{t,p}
        \left(
        u_0+s d_1+r d_2
        \right)
        d_2
        \,\mathrm{d}s\,\mathrm{d}r.
    \end{equation}
    Because the Hessian is Lipschitz on the intervention surface,
    \begin{equation}
        \nabla_u^2 L_{t,p}
        \left(
        u_0+s d_1+r d_2
        \right)
        =
        Q_{t,p}
        +
        O\!\left(
        s\|d_1\|
        +
        r\|d_2\|
        \right).
    \end{equation}
    Substituting this expression into the exact identity gives
    \begin{align}
        I_p(e_1,e_2)
         & =
        \int_0^1
        \int_0^1
        d_1^\top Q_{t,p}d_2
        \,\mathrm{d}s\,\mathrm{d}r
        \nonumber \\
         & \quad+
        O\!\left(
        \|d_1\|
        \|d_2\|
        \left(
            \|d_1\|
            +
            \|d_2\|
            \right)
        \right).
    \end{align}
    The leading integrand is constant in $s$ and $r$, so
    \begin{equation}
        \int_0^1
        \int_0^1
        d_1^\top Q_{t,p}d_2
        \,\mathrm{d}s\,\mathrm{d}r
        =
        d_1^\top Q_{t,p}d_2.
    \end{equation}
    Furthermore,
    \begin{equation}
        \|d_1\|
        \|d_2\|
        \left(
        \|d_1\|
        +
        \|d_2\|
        \right)
        =
        O\!\left(
        \left(
        \|d_1\|
        +
        \|d_2\|
        \right)^3
        \right).
    \end{equation}
    Restoring the original notation proves \Cref{eq:shared-target-interaction}.
\end{proof}

If the mixed second-order term in \Cref{eq:shared-target-interaction} is nonzero, it is the leading local interaction.
If it vanishes, the leading interaction may occur at third or higher order.
For larger message changes, the exact identity in \Cref{eq:exact-shared-target-interaction} remains valid, but the clean-point Hessian may no longer provide an accurate approximation.

The smoothness assumption makes \Cref{prop:shared-target} a conditional local illustration rather than a general statement.
It does not apply directly at ReLU-style kinks, or to $L_1$, argmax, and other nonsmooth or discrete task metrics.
The finite-difference definition in \Cref{eq:position-localised-interaction} needs no such assumption and remains meaningful in those cases.

\subsection{Implications for Circuit Localization and Graph Learning}

\Cref{prop:shared-target} shows that the first potentially nonzero local interaction between two edges entering the same read input is second-order.
The matrix $Q_{t,p}$ describes how the downstream computation responds nonlinearly to changes in the shared read input.
It depends on the shared read input, token position, and downstream computation, but not on the particular pair of incoming edges.
The pair-specific information is instead contained in the message changes $\Delta m_{e_1,p}$ and $\Delta m_{e_2,p}$.
This structure is well suited to parameter sharing: a GNN can apply a common update rule across the graph while adapting its output to the features, shared components, and neighborhoods of individual edges.
The learner can therefore reuse interaction patterns across edge pairs, computation graphs, and model--task cases.

Vanilla EAP estimates edge importance using a first-order Taylor approximation around the clean computation and therefore does not include this explicit mixed term \citep{10.18653/v1/2024.blackboxnlp-1.25}.
EAP-IG can incorporate curvature and higher-order effects by evaluating gradients between corrupted and clean inputs, but allocates these effects among scalar edge scores rather than identifying interactions between particular pairs \citep{10.48550/arXiv.2403.17806}.
The Integrated Hessians method instead attributes pairwise feature interactions explicitly through mixed derivatives \citep{10.48550/arXiv.2002.04138}.

This distinction matters when an edge interacts with one pathway but not another, or when multiple pathways are redundant.
Redundant OR-like mechanisms may be only partially recovered by standard circuit-discovery procedures \citep{10.48550/arXiv.2505.10039}.
Related dependencies appear in backup behavior and self-repair \citep{10.48550/arXiv.2211.00593,10.48550/arXiv.2307.15771,10.48550/arXiv.2402.15390}, as well as in copy suppression, whose effect depends on information written by earlier components \citep{10.48550/arXiv.2310.04625}.
These phenomena are not direct instances of \Cref{prop:shared-target}, but they demonstrate more generally that pathway relevance can depend on the surrounding computation.

A graph learner provides a direct mechanism for modeling such dependencies by exchanging representations between computationally related edges.
The directed line graph and incidence graph expose these relationships differently.
Composable edges are directly adjacent in the directed line graph.
However, two edges entering the same target do not compose and are therefore not directly adjacent.
If their shared target $t$ has an outgoing edge $e_3=(t,u)$, the directed line graph contains
\begin{equation}
    \bar e_1\rightarrow \bar e_3,
    \qquad
    \bar e_2\rightarrow \bar e_3.
\end{equation}
With bidirectional message passing, information can travel between the incoming edges through $\bar e_3$ in two message-passing steps.\footnote{DirGNN aggregates along both directions and can realize this exchange. DAGformer variants attend only along the directed edges, so under the observed connectivity the two incoming edge representations do not reach one another.} In the incidence graph, both incoming edge nodes are incident to their shared component node $t$. With bidirectional propagation over the incidence relations, information can travel along
\begin{equation}
    \bar e_1
    \rightarrow
    t
    \rightarrow
    \bar e_2
\end{equation}
in two message-passing steps.

Once both edge representations lie within the same receptive field, the graph learner can condition the prediction for one edge directly on the observed features of the other. Thus, the graph structure provides an inductive bias by restricting this exchange to computationally related pathways rather than treating every possible pair of transformer edges as equally relevant.

\section{Additional Experimental Details, Results, and Ablations}
\label{app:additional-results}

Our implementation is openly available at \url{https://github.com/graph-learning-circuits/graph-learning-circuits}.

\subsection{Additional Experimental Details}
\label{app:experimental-details}

Here, we explain how we evaluate each method on the 16 held-out model--task pairs and how we choose hyperparameters and training duration using the cross-validation pool of 50 model--task pairs. See \Cref{sec:benchmark} for a description of the benchmark pairs

We evaluate circuit recovery separately on each held-out pair. Two score granularities are involved: GCL assigns separate scores to the query, key, and value input connections of each attention head because they represent distinct computational pathways, whereas \textsc{InterpBench} evaluates these pathways at the level of the parent attention head. During training and validation, our binary cross-entropy (BCE) loss and AUROC operate at the query/key/value-connection level. For held-out evaluation, we apply the score-level equivalent of the official \textsc{InterpBench} head-promotion rule. Specifically, each head-level connection receives the maximum of its corresponding query, key, and value input-connection scores. We compute AUROC on these converted scores for comparability with the baselines reported by \textsc{InterpBench}. Held-out results are averaged over 5 seeds within each model--task pair and then summarized across the 16 pairs.

For each GCL configuration, we compare a grid of 18 hyperparameter settings
using grouped 5-fold cross-validation. For every setting and fold, we train on the other 4 folds and record validation AUROC on the remaining fold at regular intervals. Validation AUROC scores the query, key, and value input connections directly, without the head-level conversion used for held-out evaluation. Some settings did not complete all 5 folds because of out-of-memory errors, so the number of settings eligible for selection ranges from 9 to 18 across the 14 configurations. We score each completed setting at every training step where all 5 folds have a validation measurement, using a weighted average of the 5 per-fold scores. The weight of each fold is the number of groups of related cases it contains (\Cref{app:benchmark-split}). We select the setting and training duration that jointly maximize this weighted average, breaking ties in favor of the earlier step. With the selected setting and duration, we reinitialize the predictor and train it on all 50 model--task pairs using 5 random seeds.

For every model--task pair and random seed, our PGExplainer adaptation fits a newly initialized MLP using only that pair's prompts and model outputs. To choose hyperparameters, we compare a grid of 18 settings using the same grouped 5-fold cross-validation procedure as for GCL. The ground-truth circuit masks of the 50 cross-validation pairs score the candidate settings but never enter an individual fit. With the selected setting, we fit the adaptation separately on each of the 16 held-out model--task pairs using 5 random seeds, and their ground-truth masks are used only for evaluation.

\subsection{Additional Experimental Results and Ablations}
\label{app:additional-experimental-results}

\Cref{tab:additional-results} reports cross-validation selection scores and
held-out edge-AUROC summaries for the 14 GCL configurations, our PGExplainer adaptation, and the published \textsc{InterpBench} baselines.
We chose the configuration highlighted in \Cref{fig:heldout-auroc} after
evaluating all 14 on the held-out cases. Its median of $0.902$ is the best
result we observed, not one we could have predicted in advance, and should be read as a post-selection estimate. Cross-validation ranks this configuration second of the 14. Selecting on cross-validation scores alone would have chosen the DirGNN incidence-graph variant, whose held-out median is $0.876$.

\begin{table}[htbp]
    \centering
    \scriptsize
    \setlength{\tabcolsep}{2.4pt}
    \renewcommand{\arraystretch}{1.08}
    \resizebox{\textwidth}{!}{%
        \begin{tabular}{@{}lllrrrrrr@{}}
            \toprule
                                                                &                      &                          & Cross-validation & \multicolumn{5}{c@{}}{Held-out test}                                                                         \\
            \cmidrule(lr){4-4}\cmidrule(l){5-9}
            Method                                              & Graph transformation & Connectivity             & Mean AUROC       & Min.                                 & Q1              & Median          & Q3              & Max.            \\
            \midrule
            \multicolumn{9}{@{}l}{\textit{Graph Circuit Learning}}                                                                                                                                                                                  \\
            Factorized DAGformer                                & Line                 & Observed                 & $0.890$          & $0.605\pm0.033$                      & $0.771\pm0.022$ & $0.849\pm0.014$ & $0.962\pm0.003$ & $0.988\pm0.006$ \\
            Factorized DAGformer                                & Line                 & No message-passing edges & $0.884$          & $0.525\pm0.062$                      & $0.710\pm0.027$ & $0.833\pm0.011$ & $0.924\pm0.027$ & $0.992\pm0.007$ \\
            Factorized DAGformer                                & Line                 & Complete DAG             & $0.912$          & $0.606\pm0.045$                      & $0.820\pm0.031$ & $0.871\pm0.015$ & $0.922\pm0.012$ & $0.963\pm0.006$ \\
            Factorized DAGformer                                & Incidence            & Observed                 & $0.910$          & $0.699\pm0.019$                      & $0.816\pm0.017$ & $0.887\pm0.018$ & $0.914\pm0.023$ & $0.965\pm0.010$ \\
            Factorized DAGformer                                & Incidence            & No message-passing edges & $0.819$          & $0.380\pm0.098$                      & $0.490\pm0.132$ & $0.506\pm0.136$ & $0.546\pm0.135$ & $0.638\pm0.103$ \\
            Factorized DAGformer                                & Incidence            & Complete DAG             & $0.914$          & $0.574\pm0.091$                      & $0.806\pm0.025$ & $0.872\pm0.019$ & $0.915\pm0.017$ & $0.965\pm0.017$ \\
            DirGraphConv                                        & Line                 & Observed                 & $0.943$          & $0.712\pm0.009$                      & $0.861\pm0.015$ & $0.902\pm0.011$ & $0.942\pm0.008$ & $0.996\pm0.005$ \\
            DirGraphConv                                        & Line                 & No message-passing edges & $0.883$          & $0.557\pm0.045$                      & $0.741\pm0.009$ & $0.825\pm0.012$ & $0.964\pm0.015$ & $0.992\pm0.004$ \\
            DirGraphConv                                        & Incidence            & Observed                 & $0.949$          & $0.748\pm0.012$                      & $0.824\pm0.008$ & $0.876\pm0.018$ & $0.936\pm0.005$ & $0.996\pm0.005$ \\
            DirGraphConv                                        & Incidence            & No message-passing edges & $0.833$          & $0.628\pm0.020$                      & $0.760\pm0.018$ & $0.846\pm0.007$ & $0.961\pm0.021$ & $0.990\pm0.006$ \\
            Quadratic DAGformer                                 & Line                 & Observed                 & $0.899$          & $0.675\pm0.038$                      & $0.788\pm0.033$ & $0.831\pm0.023$ & $0.922\pm0.027$ & $0.976\pm0.006$ \\
            Quadratic DAGformer                                 & Line                 & No message-passing edges & $0.883$          & $0.504\pm0.033$                      & $0.718\pm0.011$ & $0.825\pm0.026$ & $0.919\pm0.014$ & $0.992\pm0.004$ \\
            Quadratic DAGformer                                 & Incidence            & Observed                 & $0.908$          & $0.623\pm0.052$                      & $0.802\pm0.033$ & $0.879\pm0.035$ & $0.916\pm0.034$ & $0.976\pm0.008$ \\
            Quadratic DAGformer                                 & Incidence            & No message-passing edges & $0.819$          & $0.677\pm0.201$                      & $0.716\pm0.073$ & $0.821\pm0.060$ & $0.884\pm0.066$ & $0.898\pm0.056$ \\
            \midrule
            \multicolumn{9}{@{}l}{\textit{GNN explainer}}                                                                                                                                                                                           \\
            \multicolumn{3}{@{}l}{PGExplainer}                  & $0.732$              & $0.668\pm0.041$          & $0.743\pm0.026$  & $0.858\pm0.011$                      & $0.986\pm0.004$ & $0.998\pm0.000$                                     \\
            \midrule
            \multicolumn{9}{@{}l}{\textit{Published \textsc{InterpBench} baselines}}                                                                                                                                                                \\
            \multicolumn{3}{@{}l}{ACDC}                         & ---                  & $0.441$                  & $0.839$          & $0.959$                              & $0.987$         & $1.000$                                             \\
            \multicolumn{3}{@{}l}{EAP-IG}                       & ---                  & $0.412$                  & $0.766$          & $0.910$                              & $0.987$         & $1.000$                                             \\
            \multicolumn{3}{@{}l}{Edge-wise Subnetwork Probing} & ---                  & $0.314$                  & $0.764$          & $0.884$                              & $1.000$         & $1.000$                                             \\
            \multicolumn{3}{@{}l}{Node-wise Subnetwork Probing} & ---                  & $0.157$                  & $0.678$          & $0.852$                              & $0.928$         & $0.971$                                             \\
            \multicolumn{3}{@{}l}{EAP}                          & ---                  & $0.000$                  & $0.000$          & $0.000$                              & $0.393$         & $0.990$                                             \\
            \bottomrule
        \end{tabular}
    }
    \caption{
        Cross-validation selection scores and edge-AUROC summaries across the 16 held-out test cases. The cross-validation column is the selection score defined in \Cref{app:experimental-details}, computed on the 50 pairs from a single seed, so it carries no interval. It scores at the connection level, where an attention head's query, key, and value inputs are separate edges, whereas the held-out columns use the official \textsc{InterpBench} evaluation, which scores those inputs at the level of the parent attention head. The two groups of columns are therefore not comparable. Each GCL and our PGExplainer adaptation held-out entry summarizes 16 case scores after averaging 5 seeds within each case. The value after $\pm$ is the standard deviation of that statistic across seeds. The ``No message-passing edges'' rows remove every message-passing edge while retaining every node and all of its features. For the line-graph rows this isolates the value of message passing on the observed computation graph. For the incidence-graph rows, the comparison does not hold the usable input information fixed. In the incidence graph, three of the six feature roles are stored on component nodes, so removing the message-passing edges also removes the edge readout's access to them.
        Baseline values are taken from Figure~9 of the \textsc{InterpBench} paper \citep{10.52202/079017-2950}. %
    }
    \label{tab:additional-results}
\end{table}

\section{Computational Costs}
\label{app:complexity}

We use the standard work--span model of parallel complexity \citep{10.1145/227234.227246}.
For a method $X$, $W_X$ is its total work in terms of the total amount of computation required, and $S_X$ is its span in terms of the number of sequential steps along the longest chain of dependencies.

We compare the work and span required to localize a circuit for one new model--task pair. For Graph Circuit Learning (GCL), this is its amortized inference cost after the shared predictor has been trained. We omit the cost of model selection or predictor training.

Let $(W_{\mathrm{feat}},S_{\mathrm{feat}})$ be the fixed GCL feature construction for one pair, and let $(W_{\mathrm{pred}},S_{\mathrm{pred}})$ be feature alignment, graph transformation, one trained-predictor evaluation, pooling, and readout.
Let $W_{\mathrm{EAP}}$ be the work of one EAP calculation. $W_{\mathrm{feat}}$ and $W_{\mathrm{EAP}}$ have the same asymptotic order of target-model work for matched support sizes, because each runs a fixed number of forward and backward passes of the target model per prompt pair.  Thus, GCL has per-pair costs:
\begin{equation}
    \left(W_{\mathrm{GCL}},S_{\mathrm{GCL}}\right)
    =
    \left(W_{\mathrm{feat}}+W_{\mathrm{pred}},S_{\mathrm{feat}}+S_{\mathrm{pred}}\right).
    \label{eq:complexity-gcl-eap}
\end{equation}

EAP contracts its attribution quantities directly into edge scores \citep{10.18653/v1/2024.blackboxnlp-1.25}, whereas GCL adds the predictor cost in \Cref{eq:complexity-gcl-eap}. Therefore, we do not claim a target-model-cost advantage over EAP.

EAP-IG averages EAP attribution across $K$ evaluation points along the path from corrupted to clean activations, including the endpoints \citep{10.48550/arXiv.2403.17806}.
Let $(W_{\mathrm{IG,pre}},S_{\mathrm{IG,pre}})$ and $(W_{\mathrm{IG,reduce}},S_{\mathrm{IG,reduce}})$ be its shared preprocessing and final reduction costs, respectively.
Writing $(W_k,S_k)$ for the cost at integration point $k$, its work and ideal parallel span are
\begin{equation}
    \begin{aligned}
        W_{\mathrm{IG}}
         & =
        W_{\mathrm{IG,pre}}+\sum_{k=1}^{K}W_k+W_{\mathrm{IG,reduce}},
        \\
        S_{\mathrm{IG}}^{\mathrm{parallel}}
         & =
        S_{\mathrm{IG,pre}}+\max_k S_k+S_{\mathrm{IG,reduce}},
        \quad
        S_{\mathrm{IG,reduce}}=\mathcal{O}(\log K).
    \end{aligned}
\end{equation}

For $X\in\{\mathrm{PG},\mathrm{SP}\}$, let $U_X$ be the number of optimizer updates used to fit our PGExplainer adaptation or Subnetwork Probing instance for one pair.
Its work and span are
\begin{equation}
    W_X=W_{X,\mathrm{init}}+\sum_{u=1}^{U_X}W_{X,u}+W_{X,\mathrm{out}},
    \qquad
    S_X=S_{X,\mathrm{init}}+\sum_{u=1}^{U_X}S_{X,u}+S_{X,\mathrm{out}}.
\end{equation}
Our PGExplainer adaptation updates MLP parameters, whereas Subnetwork Probing updates a directly parameterized mask.
Within each update, all mask variables are optimized jointly through a common objective.  In both cases, update $u+1$ depends on the parameters produced by update $u$, so the updates are sequential \citep{10.48550/arXiv.2011.04573,10.18653/v1/2021.naacl-main.74}.

At a fixed ACDC threshold $\tau$, let $N_{\tau}$ be the number of edges examined along its greedy trajectory, and let $(W_{\tau,j},S_{\tau,j})$ be the cost of the intervention for the $j$-th examined edge.
Its work and span are
\begin{equation}
    W_{\mathrm{ACDC}}(\tau)=W_{\mathrm{pre}}(\tau)+\sum_{j=1}^{N_{\tau}} W_{\tau,j},
    \qquad
    S_{\mathrm{ACDC}}(\tau)=S_{\mathrm{pre}}(\tau)+\sum_{j=1}^{N_{\tau}} S_{\tau,j}.
\end{equation}
The intervention costs can differ by edge, and each intervention is evaluated against the circuit state produced by earlier accepted removals, so later decisions depend on earlier ones \citep{10.52202/075280-0719}.

GCL performs target-model tracing with the same asymptotic order of target-model work as EAP. %
 Unlike EAP-IG, it does not require repeated integration-point evaluations. Unlike our PGExplainer adaptation and Subnetwork Probing, it does not require a per-pair optimization loop. Unlike ACDC, it does not follow a sequential greedy intervention trajectory. This comparison characterizes marginal algorithmic work and dependency structure rather than establishing a universal wall-clock or total-cost advantage. Such an advantage also depends on predictor-training amortization, the scaling of ($W_{\mathrm{pred}}$), hardware, batching, and model and graph sizes.

\section{Extended Related Work}
\label{app:extended-related-work}

\paragraph{Circuit localization methods.}
Circuit localization identifies a subgraph or \emph{circuit} of a model's computation graph that is sufficient to reproduce a particular behavior.
Automated methods for circuit localization reduce the manual effort required to select components or connections.
ACDC tests removals of individual edges with causal interventions, whereas EAP and EAP-IG estimate edge effects from gradients \citep{10.52202/075280-0719,10.18653/v1/2024.blackboxnlp-1.25,10.48550/arXiv.2403.17806}.
Subnetwork Probing and Edge Pruning instead optimize directly parameterized circuit masks \citep{10.18653/v1/2021.naacl-main.74,10.52202/079017-0587}.
These methods estimate or optimize a new circuit for each model--task pair and therefore provide the most direct baselines for Graph Circuit Learning.

Recent methods have also applied learning to circuit localization, although they differ in what is learned and whether it is reused.
MechRL trains a policy with rewards from causal interventions \citep{10.48550/arXiv.2605.26343}.
Its learned policy can be applied to unseen behaviors without further training, whereas its warm-start variant adapts the policy to the new behavior.
Differentiable Faithfulness Alignment learns to map node-importance scores from a source model to a target model, using a differentiable faithfulness objective on the target \citep{10.48550/arXiv.2604.24302}.
 Universal Circuits jointly learn an aligned feature space across vision transformers and use it to identify class circuits that are shared across models or specific to individual models \citep{SliNacpKqc}. The alignment is learned jointly for the models under study, and the resulting circuits are defined over learned features rather than component connections.
CircuitLasso does not reuse a localization model across cases.
It fits a sparse regression model to activations from one model and task, producing circuits over neurons or sparse-autoencoder features \citep{10.48550/arXiv.2606.16939}.

By contrast, Graph Circuit Learning (GCL) is trained on synthetic model--task pairs with known circuits and then applied to unseen pairs. Beyond these differences in supervision and transfer, our study asks whether message passing over computation-graph structure improves circuit localization. This question motivates the connections to GNN explainers and weight-space learning.

\paragraph{GNN explainers.}
Post-hoc GNN explainers identify which part of an input graph supports a prediction made by a fixed graph neural network.
GNNExplainer optimizes a subgraph for each queried prediction. SubgraphX searches candidate subgraphs for a given query \citep{10.48550/arXiv.1903.03894,10.48550/arXiv.2102.05152}.
PGExplainer trains one edge-mask generator across inputs, allowing it to explain a new input without another round of mask fitting \citep{10.48550/arXiv.2011.04573}.
GraphMask likewise learns reusable gates that remove messages from a fixed GNN while preserving its predictions \citep{10.48550/arXiv.2010.00577}.
Gem derives training targets from the effects of edge removals and then trains a generator that explains new inputs \citep{10.48550/arXiv.2104.06643}.

Among these methods, PGExplainer provides a parameterized edge scorer and therefore offers a useful template for adapting graph explainability methods to circuit localization. Unlike GCL, however, our adaptation is fitted independently to each model--task pair.

\paragraph{Weight-space learning.}
Weight-space learning treats neural weights as a meaningful domain for analysis
and modeling \citep{10.48550/arXiv.2603.10090}.
Permutation Equivariant Neural Functionals process model weights while respecting the fact that hidden units can be reordered without changing the represented function \citep{10.52202/075280-1085}.
Graph Metanetworks encode a neural network as a graph, allowing one metanetwork to process different architectures while respecting parameter reorderings that leave the represented function unchanged \citep{10.48550/arXiv.2312.04501}. Such methods  describe or modify the network as a whole, for example by predicting a property or editing its parameters.
GCL adopts the same broad view of a trained network as structured data, but adds activations and gradients from a specified task.
Its output is one score for each candidate circuit connection rather than a prediction or transformation at the level of the whole network.

\paragraph{Foundation models and cross-domain transfer.}
Prior-data fitted networks provide a related example of training once across many tasks and applying the resulting predictor to unseen tasks.
TabPFN is pretrained on millions of synthetic tabular datasets sampled from a prior, then uses labeled examples from an unseen dataset as context to predict its unlabeled examples without fitting a new model \citep{10.1038/s41586-024-08328-6}.
Graph foundation models pursue a related form of reuse across graph domains.
GFT pretrains a transferable vocabulary of computation-tree patterns for downstream graph tasks, while Finkelshtein et al.\ construct graph foundation models with node- and label-permutation equivariance and feature-permutation invariance and evaluate them in zero-shot node classification across graphs with different features and labels \citep{10.52202/079017-3412,10.48550/arXiv.2506.14291}.

\paragraph{Program-based and formal mechanistic interpretability.}
Restricted Access Sequence Processing (RASP)  expresses sequence algorithms through operations that transformers can implement, and \textsc{Tracr} compiles RASP programs into transformer weights with known internal structure \citep{10.48550/arXiv.2106.06981,10.52202/075280-1649}.
\textsc{TracrBench} provides a large collection of validated RASP programs and compiled models.  \textsc{InterpBench} trains models to preserve program-derived circuits while giving them weights and activations closer to those of conventionally trained models \citep{10.48550/arXiv.2409.13714,10.52202/079017-2950}.
Together, these controlled models make it possible to study whether circuit labels can be learned across cases rather than inferred anew for every case.

Other work seeks to recover a program rather than a circuit.
Transformer Programs restrict the model and its training procedure so that the trained network can be converted into a discrete program \citep{10.52202/075280-2131}.
Neural Decompiling trains a model to recover complete RASP programs from \textsc{Tracr} weights \citep{10.1007/978-3-031-71602-7_3}.
\citet{10.48550/arXiv.2602.08857} instead reparameterize a trained transformer as a RASP program and use causal interventions to isolate a small sufficient subprogram, recovering interpretable programs from small transformers trained on algorithmic and formal-language tasks.
Formal approaches pursue stronger guarantees.
\citet{10.52202/079017-2523} use a mechanistic interpretation of a model to prove lower bounds on its accuracy.
\citet{10.48550/arXiv.2602.16823} use neural network verification to discover circuits with provable guarantees: agreement with the model over a continuous input region, robustness of that agreement under patching perturbations, and minimality.
Both lines of work currently operate on small models, because verification and proof techniques are challenging to scale to modern transformers. These approaches certify behavior of either the whole model or a discovered circuit rather than supplying a unique ground-truth circuit. To the best of our knowledge, benchmarks whose circuits are known by construction provide the clearest established source of ground-truth supervision for circuit localization across model--task pairs.

\section{Generative AI Usage}

Table~\ref{tab:aiassists} summarizes our use of generative AI assistants. We take full responsibility for all content in this work.

\begin{table}[!h]
    \centering
    \begin{tabular}{|c|c|}
        \hline
        \textbf{AI assistant from $\ldots$} & \textbf{Usage}                    \\ \hline
        Anthropic                  & Theoretical analysis              \\ \hline
        OpenAI                     & Computational complexity analysis \\ \hline
        Anthropic \& OpenAI        & Design dataset splits             \\ \hline
        Anthropic \& OpenAI        & Generate code for experiments     \\ \hline
        Anthropic \& OpenAI        & Generate figures and tables       \\ \hline
        Anthropic \& OpenAI        & Proofread the final manuscript    \\
        \hline
    \end{tabular}
    \caption{Generative AI Usage. From Anthropic, we used Opus 4.8, Opus 5, and Fable 5. From OpenAI, we used GPT 5.5, GPT-5.6-Sol, and GPT-5.6-Terra.  We independently verified all output from the AI assistants and made all final methodological and interpretive decisions. }
    \label{tab:aiassists}
\end{table}

\end{document}